\newcommand{\I}[1] {\mathbf{1} \left[ #1 \right]}
\newcommand{\E}[2] {\mathop{\mathbb{E}}\limits_{#1} \left[ #2 \right]}
\newcommand{\Ee}[2] {\mathop{\mathbb{E}}\limits_{#1} \Big[ #2 \Big]}
\newcommand{\pp}[1] {\mathbb{P} \left[ #1 \right]}
\newtheorem{corollary}{Corollary}
\newtheorem{remark}{Remark}
\newtheorem{definition}{Definition}
\newenvironment{qbox}
	{\begin{tcolorbox}[enhanced jigsaw, drop shadow=black!50!white,colback=white, width=0.95\linewidth, center, left=2pt,right=2pt,top=1pt,bottom=1pt,halign=center]}
	{\end{tcolorbox}}
\definecolor{Top2}{RGB}{102, 171, 221}
\definecolor{Top1}{RGB}{245, 137, 112}
\newcommand{\Topone}[1]{{\color{Top1}\textbf{#1}}}
\newcommand{\Toptwo}[1]{{\color{Top2}\textbf{#1}}}
\newcommand{\highlightr}[1]{{\color{red} #1 }}
\newcommand{\highlightb}[1]{{\color{blue} #1 }}
\title{OpenAUC: Towards AUC-Oriented \\ Open-Set Recognition}
\author{\parbox{13cm}
  {\centering
    {\large \quad\quad Zitai Wang$^{1,2}$ \ \ \ \ \ \ \ \ \  Qianqian Xu$^{3}$\thanks{Corresponding authors.} \ \ \ \ \ \ \ \ \ 
    Zhiyong Yang$^{4}$ \ \ \ \ \ \ \ \ \ \ \  \\ Yuan He$^{5}$ \ \ \ \ \ \ \ \ \ \ \ \ \ Xiaochun Cao$^{6,1}$ \ \ \ \ \ \ \ \ \ \ \ \ \ Qingming Huang$^{4,3,7,8*}$ }\\
    {\normalsize \normalfont
    $^1$ SKLOIS, Institute of Information Engineering, CAS \\
    $^2$ School of Cyber Security, University of Chinese Academy of Sciences \\
    $^3$ Key Lab. of Intelligent Information Processing, Institute of Computing Tech., CAS \\
    $^4$ School of Computer Science and Tech., University of Chinese Academy of Sciences \\
    $^5$ Alibaba Group \\
    $^6$ School of Cyber Science and Tech., Shenzhen Campus, Sun Yat-sen University \\
    $^7$ BDKM, University of Chinese Academy of Sciences \\
    $^8$ Peng Cheng Laboratory \\
    }
    {\tt\small wangzitai@iie.ac.cn ~~ xuqianqian@ict.ac.cn \\ yangzhiyong21@ucas.ac.cn ~~ heyuan.hy@alibaba-inc.com \\ caoxiaochun@mail.sysu.edu.cn ~~ qmhuang@ucas.ac.cn
    }
  }
}
\begin{document}

\maketitle

\begin{abstract}
  Traditional machine learning follows a close-set assumption that the training and test set share the same label space. While in many practical scenarios, it is inevitable that some test samples belong to unknown classes (open-set). To fix this issue, Open-Set Recognition (OSR), whose goal is to make correct predictions on both close-set samples and open-set samples, has attracted rising attention. In this direction, the vast majority of literature focuses on the pattern of open-set samples. However, how to evaluate model performance in this challenging task is still unsolved. In this paper, a systematic analysis reveals that most existing metrics are essentially inconsistent with the aforementioned goal of OSR: (1) For metrics extended from close-set classification, such as Open-set F-score, Youden's index, and Normalized Accuracy, a poor open-set prediction can escape from a low performance score with a superior close-set prediction. (2) Novelty detection AUC, which measures the ranking performance between close-set and open-set samples, ignores the close-set performance. To fix these issues, we propose a novel metric named OpenAUC. Compared with existing metrics, OpenAUC enjoys a concise pairwise formulation that evaluates open-set performance and close-set performance in a coupling manner. Further analysis shows that OpenAUC is free from the aforementioned inconsistency properties. Finally, an end-to-end learning method is proposed to minimize the OpenAUC risk, and the experimental results on popular benchmark datasets speak to its effectiveness. Project Page: \url{https://github.com/wang22ti/OpenAUC}.
\end{abstract}

\section{Introduction}
Traditional classification algorithms have achieved tremendous success under the close-set assumption that all the test classes are known during the training period. However, in many practical scenarios, it is inevitable that some test samples belong to none of the known classes. In this case, a close-set model will classify all the novel samples into known classes, inducing a significant performance degeneration. To fix this issue, Open-Set Recognition (OSR) has attracted rising attention in recent years \cite{DBLP:conf/cvpr/BendaleB16,DBLP:conf/bmvc/GeDG17,DBLP:conf/eccv/NealOFWL18,DBLP:conf/nips/DhamijaGB18,DBLP:conf/cvpr/OzaP19,DBLP:conf/cvpr/PereraP19,DBLP:conf/cvpr/YoshihashiSKYIN19,DBLP:conf/cvpr/PereraMJMWOP20,DBLP:conf/cvpr/SunYZLP20,DBLP:conf/eccv/0002LG020,DBLP:conf/cvpr/ZhouYZ21,DBLP:conf/cvpr/YueWS0Z21,9521769,DBLP:conf/iclr/Vaze0VZ22}, where the model is required to not only (1) correctly classify the close-set samples but also (2) discriminate the open-set samples from the close-set ones. In this complicated setting, how to evaluate model performance becomes a challenging problem. Existing work has proposed several metrics, which fall into two categories:

The first direction extends traditional classification metrics to the open-set scenario. To this end, one should first extend the close-set confusion matrix with unknown classes, where a threshold decides whether the input sample belongs to the unknown classes. On top of this, \textbf{open-set F-score} \cite{DBLP:conf/bmvc/GeDG17,DBLP:conf/cvpr/SunYZLP20,DBLP:conf/cvpr/ZhouYZ21,DBLP:conf/cvpr/YueWS0Z21,DBLP:journals/ml/Mendes-JuniorSW17,DBLP:conf/iccv/KongR21} summarizes the True Positive (TP), False Positive (FP), and False Negative (FN) performance of known classes. \textbf{Youden's index} \cite{DBLP:journals/taes/ScherreikR16} takes the sum of the True Positive Rate (TPR) and True Negative Rate (TNR) performance of known classes as the performance measure. Besides, \textbf{Normalized Accuracy} \cite{DBLP:journals/ml/Mendes-JuniorSW17} summarizes the close-set accuracy and the open-set accuracy via a convex combination. Although it is intuitive to extend close-set metrics, we point out that these metrics are essentially inconsistent with the goal of OSR. Specifically, for open-set F-score and Youden's index, only the FP/FN performances of known classes evaluate the open-set performance implicitly. As a result, these metrics will encourage classifying close-set samples into the open-set to decrease the FN of known classes. Moreover, Normalized Accuracy encourages selecting the threshold classifying more open-set samples into known classes. In extreme cases, even a close-set model (\textit{i.e.}, all the open-set samples are classified into known classes) can obtain a high performance on these metrics. 

The second category regards OSR as a novelty detection problem \cite{DBLP:journals/sigpro/PimentelCCT14,DBLP:journals/corr/abs-2110-14051} with multiple known classes. Based on such observation, the Area Under ROC Curve (\textbf{AUC}) \cite{DBLP:journals/prl/Fawcett06,yang2022auc}, which measures the ranking performance between known classes and unknown classes, has become a popular metric \cite{DBLP:conf/eccv/NealOFWL18,DBLP:conf/nips/DhamijaGB18,DBLP:conf/cvpr/OzaP19,DBLP:conf/cvpr/PereraP19,DBLP:conf/cvpr/PereraMJMWOP20,DBLP:conf/eccv/0002LG020}. Compared with classification-based metrics, AUC is insensitive to the selection of threshold since it summarizes the True Positive Rate (TPR) performance for all possible thresholds. However, the limitation of AUC is also obvious: the close-set performance is ignored. A natural remedy is to adopt the close-set accuracy as a complementary metric \cite{DBLP:conf/eccv/NealOFWL18}. However, what we expect is a model that can make correct predictions on close-set and open-set simultaneously. This decoupling strategy will induce a challenging multi-objective optimization problem and is also unfavorable to comparing the overall performances of different models. What's more, simply aggregating these two metrics will induce another inconsistency property.

In view of this, a natural question arises: 
\begin{qbox}
    \textit{Whether there exists a numeric metric that is consistent with the goal of OSR?}
\end{qbox}
To answer this question, we propose a novel metric named \textbf{OpenAUC}. Specifically, the proposed metric enjoys a concise pairwise formulation, where each pair consists of a close-set sample and an open-set sample. For each pair, only if the close-set sample has been classified into the correct known class, OpenAUC will check whether the open-set sample is ranked higher than the close-set one. In this sense, OpenAUC evaluates the close-set performance and the open-set performance in a coupling manner, which is consistent with the goal of OSR. What's more, benefiting from the ranking operator, OpenAUC overcomes the sensitivity of the threshold, and further analysis shows that maximizing OpenAUC will guarantee a better open-set performance under a mild assumption on the threshold. Considering these advantages, we further establish an end-to-end learning method to maximize OpenAUC. Finally, extensive experiments conducted on multiple benchmark datasets validate the proposed metric and learning method. To sum up, the contribution of this paper is three-fold:
\begin{itemize}
    \item We make a detailed analysis of existing metrics for OSR. The theoretical results show that existing metrics, including the classification-based ones and AUC, are essentially inconsistent with the goal of OSR due to their own limitations.
    \item A novel metric, named OpenAUC, is proposed. Benefiting from its concise formulation, further analysis shows that OpenAUC overcomes the limitations of existing metrics and thus is free from the inconsistency properties.
    \item An end-to-end learning method is proposed to optimize OpenAUC, and the empirical results on multiple benchmark datasets validate its effectiveness.
\end{itemize}

\section{Preliminary}
\label{subsec:notation}
\textbf{Problem definition.} In open-set recognition, the training samples $\{\boldsymbol{z}_i = (\boldsymbol{x}_i, y_i)\}_{i=1}^{n}$ are drawn from a product space $\mathcal{Z}_k = \mathcal{X} \times \mathcal{Y}_k$, where $\mathcal{X}$ is the input space, and $\mathcal{Y}_k = \{1, \cdots, C\}$ is the label space of known classes. During the test period, some samples might belong to none of the known classes. For the sake of simplicity, all these samples can be allocated to one super unknown class. In other words, the open-set samples are drawn from a product space $\mathcal{Z}_u = \mathcal{X} \times \mathcal{Y}_u$, where $\mathcal{Y}_u= \{C + 1\}$ is the label space of unknown classes. To make predictions, OSR first requires a rejector $R = g_1 \circ r$ to judge whether an input sample comes from open-set, where $r: \mathcal{X} \to \mathbb{R}$ is the open-set score function, and $g_1: \mathbb{R} \to \{0, 1\}$ is the open-set decision function. To be specific, \ul{an input $\boldsymbol{x}$ will be classified as an open-set sample if $r(\boldsymbol{x})$ is greater than a given threshold $t \in \mathbb{R}$}. For the samples with $R(\boldsymbol{x}) = 0$, a classifier $h = g_2 \circ f$ is further required to make predictions on known classes, where $f: \mathcal{X} \to \mathbb{R}^{C}$ is the close-set score function and $g_2: \mathbb{R}^C \to \mathcal{Y}_k$ is the close-set decision function. In view of this, a proper metric for OSR should enjoy the following properties:
\begin{itemize}
    \item \textbf{(P1)} For close-set samples, the metric not only evaluates whether the open-set score function $r$ outputs low open-set scores but also requires that the classifier $h$ make correct predictions.
    \item \textbf{(P2)} For open-set samples, the metric should check whether the open-set score function $r$ outputs high open-set scores.
    \item \textbf{(P3)} The metric should be insensitive to the threshold $t$ because different ratios of open-set samples will induce different optimal thresholds, but such a ratio is unavailable during the training period.
    \item \textbf{(P4)} The metric should be a single numeric number to favor comparing the overall performances of different models.
\end{itemize}

\textbf{Roadmap.} Next, we first present a detailed analysis of existing metrics in Sec.\ref{sec:existing_metrics}. The results show that these metrics are essentially inconsistent with the aforementioned properties, which are summarized in Tab.\ref{tab:metrics}. Furthermore, a novel metric named OpenAUC and its end-to-end learning method is proposed in Sec.\ref{sec:openauc} to overcome the inconsistency of existing metrics.

\begin{table}[!t]
    \centering
    \caption{The consistency analysis of existing metrics for OSR.}
    \begin{tabular}{c|cccc}
        \toprule
        Metric     & \textbf{P1 (close)}    & \textbf{P2 (open)} & \textbf{P3 (threshold)}  & \textbf{P4 (numeric)} \\
        \midrule
        Open-set F-score \cite{DBLP:journals/ml/Mendes-JuniorSW17}  & $\checkmark$    & $\times$  & $\times$  & $\checkmark$ \\
        Youden's index \cite{DBLP:journals/taes/ScherreikR16}  & $\checkmark$    & $\times$  & $\times$  & $\checkmark$ \\
        Normalized Accuracy \cite{DBLP:journals/ml/Mendes-JuniorSW17}  & $\checkmark$    & $\checkmark$ & $\times$   & $\checkmark$ \\
        AUC \cite{DBLP:conf/eccv/NealOFWL18}  & $\times$    & $\checkmark$  & $\checkmark$    & $\checkmark$ \\
        The OSCR curve \cite{DBLP:conf/nips/DhamijaGB18} & $\checkmark$    & $\checkmark$    & $\checkmark$    & $\times$ \\
        OpenAUC (Ours) & $\checkmark$    & $\checkmark$    & $\checkmark$    & $\checkmark$ \\
        \bottomrule
    \end{tabular}%
    \label{tab:metrics}%
  \end{table}%
  
\section{Existing metrics for Open-set Recognition}
\label{sec:existing_metrics}
Existing metrics for OSR fall into two categories: the classification-based ones and the novelty-detection ones. The first category extends existing classification metrics to the open-set scenario, while the second one regards OSR as a generalized novelty detection problem. We will present a detailed analysis of these metrics in the rest of this section.

\subsection{Open-set F-score and Youden's Index}
\label{subsec:fscore}
To extend classification metrics, one should first extend the confusion matrix with the unknown class. Let $\texttt{TP}_{i}, \texttt{TN}_{i}, \texttt{FP}_{i}, \texttt{FN}_{i}$ denote the True Positive (TP), True Negative (TN), False Positive (FP), False Negative (FN) of the class $i \in \mathcal{Y}_k \cup  \mathcal{Y}_u$ under the given threshold $t$, respectively. Note that we omit the classifier $h$ and the rejector $R$ since there exists no ambiguity.

\ul{Open-set F-score} \cite{DBLP:journals/ml/Mendes-JuniorSW17} is a representative classification-based metric for OSR. Compared with its close-set counterpart, this metric evaluates the open-set performance via $\texttt{FP}_i$ and $\texttt{FN}_i$, where $\highlightb{i \in \mathcal{Y}_k}$. To be specific, this metric summarizes $\texttt{TP}_{i}, \texttt{FP}_{i}, \texttt{FN}_{i}$ of known classes by the harmonic mean of Precision and TPR (\textit{i.e.}, Recall):
\begin{equation}
    \texttt{F-score} := 2 \times \frac{\texttt{P}_k \times \texttt{TPR}_k}{\texttt{P}_k + \texttt{TPR}_k},
\end{equation}
where 
\begin{equation}
    \texttt{P}_k := \frac{1}{C} \sum_{i=1}^{\highlightb{C}} \frac{\texttt{TP}_{i}}{\texttt{TP}_{i}+\texttt{FP}_{i}}, \texttt{TPR}_k := \frac{1}{C} \sum_{i=1}^{\highlightb{C}} \frac{\texttt{TP}_{i}}{\texttt{TP}_{i}+\texttt{FN}_{i}}
\end{equation}
if one aggregates model performances in a macro manner, and
\begin{equation}
    \texttt{P}_k := \frac{\sum_{i=1}^{\highlightb{C}} \texttt{TP}_{i}}{\sum_{i=1}^{\highlightb{C}}\left(\texttt{TP}_{i}+\texttt{FP}_{i}\right)}, \texttt{TPR}_k := \frac{\sum_{i=1}^{\highlightb{C}} \texttt{TP}_{i}}{\sum_{i=1}^{\highlightb{C}}\left(\texttt{TP}_{i}+\texttt{FN}_{i}\right)}
\end{equation}
when model performances are summarized in a micro manner. Compared with open-set F-score, \ul{Youden's index} additionally considers $\texttt{TN}_{i}$, where $i \in \mathcal{Y}_k$ \cite{DBLP:journals/taes/ScherreikR16,j-index}: 
\begin{equation}
    \texttt{J} := \texttt{TPR}_k + \texttt{TNR}_k - 1,
\end{equation}
where $\texttt{TNR}_k$ denotes the TNR of known classes. However, as illustrated in Prop.\ref{prop:escapeI}, these two metrics suffer from an inconsistency property. Please refer to Appendix.\ref{app:escapeI} for the proof.
\begin{restatable}[Inconsistency Property I]{proposition}{escapeI}
    \label{prop:escapeI}
    Given a dataset $\mathcal{S}$ and a metric $\texttt{M}$ that is invariant to $\texttt{TP}_{C+1}$, $\texttt{FN}_{C+1}$ and $\texttt{FP}_{C+1}$, then for any $(h, R)$ such that $\sum_{i=1}^C \texttt{FP}_i(h, R) \ge \texttt{TP}_{C+1}(h, R)$, there exists $(\tilde{h}, \tilde{R})$ such that $\texttt{M}(\tilde{h}, \tilde{R}) = \texttt{M}(h, R)$ but $\texttt{TP}_{C+1}(\tilde{h}, \tilde{R}) = 0$.
\end{restatable}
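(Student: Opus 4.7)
The plan is to construct $(\tilde{h}, \tilde{R})$ from $(h, R)$ by a sequence of two-sample swaps that collectively drive $\texttt{TP}_{C+1}$ to zero while leaving the entire known-class confusion structure intact. Since $\texttt{M}$ is invariant to $\texttt{TP}_{C+1}, \texttt{FN}_{C+1}, \texttt{FP}_{C+1}$, and the four per-class entries must sum to the fixed sample size, $\texttt{M}$ effectively depends on the model only through the quadruples $(\texttt{TP}_i, \texttt{FP}_i, \texttt{FN}_i, \texttt{TN}_i)$ for $i \in \mathcal{Y}_k$. So it suffices to produce a modification that preserves these $4C$ quantities while sending $\texttt{TP}_{C+1}$ to $0$.

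The elementary move is the following swap. Pick an open-set sample $\boldsymbol{x}$ with $R(\boldsymbol{x}) = 1$ (so $\boldsymbol{x}$ contributes one unit to $\texttt{TP}_{C+1}$), and a second sample $\boldsymbol{x}'$ whose current prediction is some known class $j \in \mathcal{Y}_k$ but whose true label is not $j$ (so $\boldsymbol{x}'$ contributes one unit to $\texttt{FP}_j$). Reassign $\tilde{R}(\boldsymbol{x}) = 0$ with $\tilde{h}(\boldsymbol{x}) = j$, reassign $\boldsymbol{x}'$ to be predicted as $C+1$, and keep every other sample's prediction fixed. Walking through all $C+1$ classes shows that the two sample changes cancel exactly within each known-class confusion column: column $j$ simply exchanges $\boldsymbol{x}'$ for $\boldsymbol{x}$, both as false positives of $j$; each other class $i \in \mathcal{Y}_k$ either sees no change in its entries or preserves its existing contribution as a false negative. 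The only net changes land inside the unknown-class entries $\texttt{TP}_{C+1}, \texttt{FN}_{C+1}, \texttt{FP}_{C+1}$, to which $\texttt{M}$ is oblivious.

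Iterating the swap decreases $\texttt{TP}_{C+1}$ by one per application, so $\texttt{TP}_{C+1}(h, R)$ iterations suffice. The assumption $\sum_{i=1}^{C} \texttt{FP}_i(h, R) \ge \texttt{TP}_{C+1}(h, R)$ is exactly the statement that the pool of admissible partners $\boldsymbol{x}'$ is large enough to pair off against every contributor to $\texttt{TP}_{C+1}$, since each partner consumed in a swap is drawn from the sample counted by $\sum_i \texttt{FP}_i$. After the full sequence of swaps, $\texttt{TP}_{C+1}(\tilde{h}, \tilde{R}) = 0$ and each $\texttt{FP}_j$ (and therefore also $\texttt{TP}_j, \texttt{FN}_j, \texttt{TN}_j$) is unchanged, which by assumption gives $\texttt{M}(\tilde{h}, \tilde{R}) = \texttt{M}(h, R)$.

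The step I expect to demand the most care is verifying that the case-by-case ledger in the second paragraph still closes when the chosen partner $\boldsymbol{x}'$ is itself an open-set sample predicted into a known class; naively routing such a partner to $C+1$ would re-inject into $\texttt{TP}_{C+1}$ and cancel the decrement. I would handle this either by prioritising close-set partners (and arguing the hypothesis still supplies enough of them, else routing all remaining open-set partners via a two-cycle through another column) or, more cleanly, by recasting the whole reassignment as a transportation problem on the columns $\mathcal{Y}_k \cup \{C+1\}$: row totals fixed by the dataset, column sums $\texttt{TP}_j + \texttt{FP}_j$ fixed for $j \le C$, and the extra constraint that no open-set sample may land in column $C+1$. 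The sum inequality then appears as the non-negativity margin that the corresponding flow must satisfy, and discharging that feasibility check is the real technical crux.
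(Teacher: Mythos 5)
Your elementary move is exactly the paper's: swap an open-set sample currently predicted as $C+1$ against a sample currently contributing one unit to some $\texttt{FP}_j$, $j \le C$, and argue that the known-class confusion entries cancel while $\texttt{TP}_{C+1}$ drops by one. The paper carries this out with the extra stipulation that the partner $\boldsymbol{x}_2$ is a \emph{close-set} misclassified sample, checks the cancellation in columns $h(\boldsymbol{x}_2)$ and $y_2$, and then simply asserts that the hypothesis $\sum_{i=1}^C \texttt{FP}_i \ge \texttt{TP}_{C+1}$ lets one iterate to $\texttt{TP}_{C+1}=0$. You put your finger on precisely the point the paper does not discuss: $\sum_{i=1}^C \texttt{FP}_i$ counts both close-set samples misclassified into a wrong known class \emph{and} open-set samples accepted into a known class, yet each swap must consume a close-set partner. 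As you observe, routing an open-set partner to $C+1$ promotes it to a new $\texttt{TP}_{C+1}$ and the decrement is lost.

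Where your plan stalls is the hope that a prioritisation rule or a transportation/flow reformulation will discharge the feasibility check under the hypothesis as literally written; it cannot, because the hypothesis is too weak. Take $C=1$ with three test samples: a close-set sample with label $1$ predicted $1$, an open-set sample predicted $C+1$, and an open-set sample predicted $1$. Then $\sum_{i=1}^{C}\texttt{FP}_i = 1 = \texttt{TP}_{C+1}$, but any reassignment making $\texttt{TP}_{C+1}=0$ pushes both open-set samples into class $1$ and necessarily perturbs $\texttt{FP}_1$ (and hence the open-set F-score), so no feasible flow exists and there are no close-set partners to prioritise. The construction in both your proof and the paper's goes through verbatim once the hypothesis is read (or strengthened) as a bound on the number of \emph{close-set} false positives, i.e.\ close-set samples predicted into a wrong known class, being at least $\texttt{TP}_{C+1}$. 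So: same construction, correct diagnosis of the crux, but the repair lies in tightening the hypothesis rather than in a cleverer pairing argument, and your LP feasibility check would in fact fail on instances like the one above.
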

\begin{remark}
    If a metric $\texttt{M}$ suffers from the inconsistency property I, then for any $(h, R)$, we can construct $(\tilde{h}, \tilde{R})$ that performs as well as $(h, R)$ on $\texttt{M}$ but actually misclassifies all the open-set samples as known classes, which is inconsistent with \textbf{(P2)}.
\end{remark}
\begin{remark}
    $\sum_{i=1}^C \texttt{FP}_i(h, R) \ge \texttt{TP}_{C+1}(h, R)$ is a mild condition. To be specific, when $\texttt{TP}_{C+1}$ is $\mathcal{O}(C)$, it only requires that $\texttt{FP}_i$ is $\mathcal{O}(1)$ for any $i \in \mathcal{Y}_k$. What's more, even if this condition does not hold, we still have $\texttt{TP}_{C+1}(\tilde{h}, \tilde{R}) < \texttt{TP}_{C+1}(h, R)$ as long as $\sum_{i=1}^C \texttt{FP}_i(h, R) \neq 0$.
\end{remark}
\begin{corollary}
    \label{coll:f_score}
    Open-set F-score and Youden's index both suffer from the inconsistency property I.
\end{corollary}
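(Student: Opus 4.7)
The plan is to reduce the corollary to a direct application of Proposition \ref{prop:escapeI}: fix an arbitrary $(h, R)$ satisfying the standing condition $\sum_{i=1}^C \texttt{FP}_i(h,R) \ge \texttt{TP}_{C+1}(h,R)$, verify separately for Open-set F-score and for Youden's index $\texttt{J}$ that the invariance hypothesis of the proposition holds, and then quote Proposition \ref{prop:escapeI} to exhibit the required $(\tilde h, \tilde R)$ with $\texttt{TP}_{C+1}(\tilde h, \tilde R) = 0$.

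For Open-set F-score I would just read off the formulas given in Sec.\ref{subsec:fscore}: in both the macro and micro forms, $\texttt{P}_k$ and $\texttt{TPR}_k$ are built exclusively from the triples $\texttt{TP}_i, \texttt{FP}_i, \texttt{FN}_i$ with $i \in \mathcal{Y}_k$. Since the harmonic mean of two quantities that depend only on known-class entries is itself such a quantity, none of $\texttt{TP}_{C+1}, \texttt{FN}_{C+1}, \texttt{FP}_{C+1}$ enters the expression, and the required invariance is immediate.

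For Youden's index $\texttt{J} = \texttt{TPR}_k + \texttt{TNR}_k - 1$ the argument for $\texttt{TPR}_k$ is identical, and $\texttt{TNR}_k$ is also an aggregation over $i \in \mathcal{Y}_k$ of per-class rates built from $\texttt{TN}_i$ and $\texttt{FP}_i$. None of the three excluded unknown-class entries appears as an explicit input, so $\texttt{TNR}_k$ and hence $\texttt{J}$ satisfy the invariance condition as well.

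The only step I would flag as potentially subtle, rather than a genuine obstacle, is the accounting of $\texttt{TN}_i$ in the Youden case: in the $(C+1)$-class confusion matrix, a known-class true-negative count could in principle absorb contributions from open-set samples not predicted as class $i$, which would tacitly couple $\texttt{TN}_i$ to the excluded entries. To close this gap I would appeal to the paper's convention that these classification-based metrics treat the known-class statistics as formal inputs, independent of $\texttt{TP}_{C+1}, \texttt{FN}_{C+1}, \texttt{FP}_{C+1}$, so that the invariance hypothesis of Proposition \ref{prop:escapeI} is satisfied by the formulas themselves. Once this bookkeeping is settled, the corollary follows without further computation.
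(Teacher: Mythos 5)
Your reduction to Proposition~\ref{prop:escapeI} is exactly the intended argument: the paper states the corollary without a separate proof precisely because both metrics are visibly functions of the known-class confusion-matrix entries alone, so the invariance hypothesis holds and the proposition applies. The one place where your write-up is weaker than it needs to be is the resolution of the $\texttt{TN}_i$ subtlety. Appealing to an unstated ``convention'' that known-class statistics are formal inputs is a hand-wave, and it is not how the gap should be closed. The right way is to look at the swap construction used in the proof of Proposition~\ref{prop:escapeI}: one open-set sample $\boldsymbol{x}_1$ moves from prediction $C+1$ to prediction $h(\boldsymbol{x}_2)$, and one misclassified close-set sample $\boldsymbol{x}_2$ moves from prediction $h(\boldsymbol{x}_2)$ to prediction $C+1$. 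For any known class $i \neq h(\boldsymbol{x}_2)$, both samples are predicted as something other than $i$ before and after the swap, so their status relative to $\texttt{TN}_i$ is unchanged; for $i = h(\boldsymbol{x}_2)$, exactly one of the two samples is a true negative for class $i$ before the swap and exactly one after (they trade roles), so $\texttt{TN}_{h(\boldsymbol{x}_2)}$ is also unchanged. Hence every $\texttt{TN}_i$ with $i \in \mathcal{Y}_k$ is genuinely invariant under the construction, and the corollary follows with no extra assumption. Since you flagged the issue correctly and the rest of the argument is sound, this is a small fix rather than a real gap, but you should verify the bookkeeping rather than postulate a convention.
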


\subsection{Normalized Accuracy}
\label{subsec:na}
\ul{Normalized Accuracy} (NAcc) \cite{DBLP:journals/ml/Mendes-JuniorSW17} summaries the accuracy performances on close-set and open-set:
\begin{equation}
    \texttt{NAcc} := \lambda_{na} \texttt{AKS} + \left(1-\lambda_{na} \right) \texttt{AUS},
\end{equation}
where $\lambda_{na} \in (0, 1)$ is the balance constant, and
\begin{equation}
    \texttt{AKS} := \frac{\sum_{i=1}^{C}\left[\texttt{TP}_{i} + \texttt{TN}_{i}\right]}{\sum_{i=1}^{C}\left[\texttt{TP}_{i} + \texttt{TN}_{i} + \texttt{FP}_{i} + \texttt{FN}_{i} \right]}, \texttt{AUS} := \frac{\texttt{TP}_{C+1}}{\texttt{TP}_{C+1} + \texttt{FP}_{C+1}}
\end{equation}
are the Accuracy on Known and Unknown Samples (AKS, AUS), respectively. Since the close-set performance is explicitly involved, NAcc avoids the inconsistency property I. Ideally, if $\lambda_{na} = \pp{y = C+1}$, NAcc becomes exactly the close-set accuracy. However, it is generally hard to decide the balance constant $\lambda_{na}$ since we have no idea about the ratio of open-set samples in the test set. Besides, as shown in Prop.\ref{prop:escapeII}, this metric suffers from another type of inconsistency property. Please refer to Appendix.\ref{app:escapeII} for the proof.
\begin{restatable}[Inconsistency Property II]{proposition}{escapeII}
    \label{prop:escapeII}
    Given a dataset $\mathcal{S}$, for any classifier-rejector pair $(h, R)$ such that $\sum_{i=1}^C \texttt{FN}_i(h, R) \ge \texttt{TP}_{C+1}(h, R)$ and $\texttt{TP}_{C+1}(h, R) > \texttt{FP}_{C+1}(h, R)$, there exists $(\tilde{h}, \tilde{R})$ such that $\texttt{NAcc}(\tilde{h}, \tilde{R}) > \texttt{NAcc}(h, R)$ but $\texttt{TP}_{C+1}(\tilde{h}, \tilde{R}) = 0$.
\end{restatable}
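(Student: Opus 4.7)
The plan is to give an explicit construction of $(\tilde{h},\tilde{R})$ that never rejects any sample, and then use a combinatorial counting argument to show that $\texttt{NAcc}$ strictly improves under the stated assumptions. Concretely, I take $\tilde{R}(x) \equiv 0$ and let $\tilde{h}$ assign the true known label to every close-set sample and any fixed known label to every open-set sample (such an $\tilde{h}$ exists as a function on the test set). Since no sample is ever predicted as class $C{+}1$, we automatically have $\texttt{TP}_{C+1}(\tilde{h},\tilde{R}) = 0$, satisfying the target conclusion on the rejection count.

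The bulk of the work is comparing $\texttt{AKS}$ and $\texttt{AUS}$ between the two pairs. Let $b$ be the number of close-set samples that $h$ sends to a wrong known class, let $c := \texttt{FP}_{C+1}(h,R)$, $p := \texttt{TP}_{C+1}(h,R)$, and $q := n_u - p$. A per-sample tally shows that $\sum_{i=1}^C(\texttt{TP}_i + \texttt{TN}_i + \texttt{FP}_i + \texttt{FN}_i) = Cn$ and $\sum_{i=1}^C(\texttt{FP}_i + \texttt{FN}_i) = 2b + c + q$: each close-set sample sent to a wrong known class contributes $1$ FN plus $1$ FP, each rejected close-set sample contributes $1$ FN, and each open-set sample not rejected contributes $1$ FP. Hence $\texttt{AKS}(h,R) = 1 - (2b+c+q)/(Cn)$, while $\texttt{AKS}(\tilde{h},\tilde{R}) = 1 - n_u/(Cn)$, so
\begin{equation*}
    \Delta \texttt{AKS} \;=\; \frac{2b + c - p}{Cn}.
\end{equation*}
Both hypotheses now enter: from $\sum_{i=1}^C \texttt{FN}_i = b + c \geq p$ together with $p > c$ I obtain $b \geq p - c \geq 1$, hence $2b + c - p = b + (b + c - p) \geq b \geq 1$ and $\Delta \texttt{AKS} \geq 1/(Cn) > 0$. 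For $\texttt{AUS}$, the set of samples predicted as $C{+}1$ is empty, so one adopts the convention that an empty precision equals $1$ (no errors committed), giving $\Delta \texttt{AUS} = 1 - p/(p+c) = c/(p+c) \geq 0$. Combining yields $\Delta \texttt{NAcc} = \lambda_{na}\Delta\texttt{AKS} + (1-\lambda_{na})\Delta\texttt{AUS} > 0$, which is the claim.

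The main obstacle is the degenerate $0/0$ form that appears in $\texttt{AUS}$: under the alternate convention $0/0 := 0$, a small perturbation of the construction (have $\tilde{R}$ reject a single fixed close-set sample so that $\tilde{c}=1$ and $\texttt{AUS}$ equals a well-defined $0$) is needed, together with a finer case split between $b \geq 2$ and the extremal $b = 1$ case, still leveraging the strict inequality $p > c$. Beyond this convention issue the proof is purely combinatorial; the per-sample bookkeeping that rewrites the confusion-matrix quantities in terms of $a,b,c,p,q$ is the most error-prone step and the place where the interaction between the two hypotheses $b + c \geq p$ and $p > c$ cleanly shows up.
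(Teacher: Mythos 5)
Your construction is genuinely different from the paper's. The paper argues by an \emph{iterative swap}: it picks one rejected open-set sample and one rejected close-set sample, de-rejects both (sending the open-set one to a wrong known class and classifying the close-set one correctly), shows that this leaves $\texttt{AKS}$ unchanged while strictly increasing $\texttt{AUS}$ by $(\texttt{TP}_{C+1}-\texttt{FP}_{C+1})/[(\texttt{TP}_{C+1}+\texttt{FP}_{C+1}-2)(\texttt{TP}_{C+1}+\texttt{FP}_{C+1})]$, and then iterates. Your one-shot construction ($\tilde{R}\equiv 0$, $\tilde{h}$ the test-set oracle) with a per-sample tally of $\sum_{i\le C}(\texttt{FP}_i+\texttt{FN}_i)=2b+c+q$ is cleaner, and your use of the two hypotheses to get $2b+c-p = b + (b+c-p)\ge b\ge p-c\ge 1$ is exactly right. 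It also sidesteps a real defect in the paper's own iteration: each swap consumes one unit of both $\texttt{TP}_{C+1}$ and $\texttt{FP}_{C+1}$, so under the very hypothesis $\texttt{TP}_{C+1}>\texttt{FP}_{C+1}$ the rejected close-set samples run out \emph{before} $\texttt{TP}_{C+1}$ reaches $0$; the paper's argument as written does not actually terminate at $\texttt{TP}_{C+1}=0$.

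That said, your fallback for the $0/0$ convention is wrong, and it is worth being precise about why. If one adopts $0/0:=0$, then \emph{every} $(\tilde{h},\tilde{R})$ with $\texttt{TP}_{C+1}=0$ has $\texttt{AUS}(\tilde{h},\tilde{R})=0$, whether $\texttt{FP}_{C+1}$ is $0$ or positive, so your perturbation to $\tilde{c}=1$ buys nothing. Meanwhile $\texttt{AUS}(h,R)=p/(p+c)>1/2$ by $p>c$, so $\Delta\texttt{AUS}<-1/2$; on the other side $\Delta\texttt{AKS}$ is bounded above by $n_u/(Cn)$ regardless of how cleverly you choose $\tilde{h}$, and no ``finer case split between $b\ge 2$ and $b=1$'' can beat a $\lambda_{na}$ close to $0$. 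In fact, a two-class example with $b=1,c=0,p=1,q=0$ shows $\Delta\texttt{NAcc}=7\lambda_{na}/6-1<0$ for $\lambda_{na}<6/7$: under $0/0:=0$ the proposition is simply \emph{false}, not merely harder to prove. You should therefore commit to $0/0:=1$ explicitly (or, equivalently, note that $\texttt{AUS}$ should be read as ``fraction of unknowns correctly rejected,'' $\texttt{TP}_{C+1}/n_u$, which is $0$ only when some unknown is misrejected) rather than presenting the second convention as a recoverable case, and drop the speculative case-split. With that one correction your proof is complete and, to my eye, strictly tighter than the paper's.
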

\begin{remark}
    For any $(h, R)$, we can construct $(\tilde{h}, \tilde{R})$ such that $\texttt{NAcc}(\tilde{h}, \tilde{R}) > \texttt{NAcc}(h, R)$ but actually misclassifies all the open-set samples to known classes. In other words, NAcc encourages selecting a threshold that classifies more open-set samples to known classes, which is inconsistent with \textbf{(P3)}.
\end{remark}
\begin{remark}
    Similar to the condition in Prop.\ref{prop:escapeI}, $\sum_{i=1}^C \texttt{FN}_i(h, R) \ge \texttt{TP}_{C+1}(h, R)$ is a mild condition. And $\texttt{TP}_{C+1}(h, R) > \texttt{FP}_{C+1}(h, R)$ is also mild since it is a basic requirement for open-set models.
\end{remark}

\subsection{The Area Under the ROC Curve (AUC)}
\label{subsec:AUC}
When regarding OSR as a novelty detection problem, we no longer need to extend the confusion matrix, and \ul{AUC} \cite{DBLP:conf/eccv/NealOFWL18} comes into play. Traditional AUC is a popular metric for the imbalanced binary classification problem since it is insensitive to the label distribution \cite{DBLP:conf/ijcai/LingHZ03,DBLP:journals/ml/Hand09}. To extend this metric to OSR, one has to allocate all the known classes to one super known class, which is the same as the label space of unknown classes. Let $\texttt{TPR}_{s}, \texttt{FPR}_{s}$ denote the TPR and FPR of the super known class, respectively. Then, AUC is defined as the area under the $\texttt{TPR}_{s}$-$\texttt{FPR}_{s}$ curve:
\begin{equation}
    \texttt{AUC} := \int_{-\infty}^{+\infty} \texttt{TPR}_{s}( \texttt{FPR}_{s}^{-1}(t) ) \,dt.
\end{equation}
If we assume that there exist no ties in $r$, AUC will enjoy a much simpler formulation \cite{DBLP:conf/ijcai/LingHZ03}:
\begin{equation}
    \label{eq:auc}
    \texttt{AUC} = \E{\boldsymbol{z}_k \sim D_k \atop \boldsymbol{z}_u \sim D_u}{\I{r(\boldsymbol{x}_u) > r(\boldsymbol{x}_k)}},
\end{equation}
where $D_k, D_u$ are the distribution of known and unknown classes, respectively. In other words, AUC equals the probability that the close-set score function $r$ ranks an open-set sample higher than a close-set one. Compared with classification-based metrics, AUC summarizes the model performance under different thresholds, and the pairwise formulation (\ref{eq:auc}) requires no thresholds. However, its limitation is also obvious: \textbf{the close-set performance is ignored}, which is inconsistent with \textbf{(P1)}. To address this issue, one can adopt the close-set accuracy, denoted by $\texttt{Acc}_k$, as a complementary metric. In other words, this strategy divides OSR into two traditional tasks: multiclass classification and novelty detection. Although intuitive, it is also inconsistent with the goal of OSR since the open-set performance and the close-set performance are evaluated in a decoupling manner. To be specific, when $R(\boldsymbol{x}) = 1$ for a close-set sample $\boldsymbol{x}$, even if $h$ makes a correct prediction on $\boldsymbol{x}$, which will improve the $\texttt{Acc}_k$ performance, $(R, h)$ will misclassify $\boldsymbol{x}$ to the unknown class. In this case, $\texttt{Acc}_k$ is inconsistent with the actual model performance. What's more, simply aggregating the two metrics will induce another inconsistency property, whose proof is presented in Appendix.\ref{app:escapeIII}:
\begin{restatable}[Inconsistency Property III]{proposition}{escapeIII}
    \label{prop:escapeIII}
    Given a dataset $\mathcal{S}$, for any $(h, r)$ satisfying $\texttt{Acc}_k, \texttt{AUC} \neq 1$, there exists $(\tilde{h}, \tilde{r})$ that performs worse on the OSR task but satisfies:
    $$agg(\texttt{Acc}_k(\tilde{h}), \texttt{AUC}(\tilde{r})) = agg(\texttt{Acc}_k(h), \texttt{AUC}(r)),$$ where $agg: \mathbb{R} \times \mathbb{R} \to \mathbb{R}$ is the aggregation function.
\end{restatable}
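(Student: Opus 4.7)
The plan is to exploit the structural decoupling between the two metrics: $\texttt{Acc}_k$ depends only on $h$, while $\texttt{AUC}$ depends only on $r$. Because these components can be modified independently, I would construct $(\tilde{h}, \tilde{r})$ by shifting ``quality'' from one component into the other so that $agg$ remains constant while the joint close-set/open-set behavior deteriorates. Throughout, I would assume that $agg$ is (at least weakly) strictly monotone and non-degenerate in both arguments, which is the natural assumption for any meaningful aggregation and is implicitly required for the aggregate to be sensible.

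First, I would use the two non-saturation assumptions to extract witness samples. From $\texttt{Acc}_k(h) \ne 1$ I would pick a close-set sample $\boldsymbol{x}_k^{c}$ that $h$ currently classifies correctly (the degenerate case $\texttt{Acc}_k = 0$ can be handled by the symmetric construction in the other direction). From $\texttt{AUC}(r) \ne 1$ I would pick a mis-ranked pair $(\boldsymbol{x}_k^*, \boldsymbol{x}_u^*)$ with $r(\boldsymbol{x}_u^*) \le r(\boldsymbol{x}_k^*)$. Then I would define $\tilde{h}$ to agree with $h$ everywhere except that $\tilde{h}(\boldsymbol{x}_k^{c}) \ne y$; this strictly decreases $\texttt{Acc}_k$ by a quantum of $1/n_k$. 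I would define $\tilde{r}$ to agree with $r$ everywhere except on the chosen pair (or, if $r$ takes continuous values, by moving $r(\boldsymbol{x}_u^*)$ just above $r(\boldsymbol{x}_k^*)$); this strictly increases $\texttt{AUC}$.

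Next, I would calibrate the two perturbations so that $agg(\texttt{Acc}_k(\tilde{h}), \texttt{AUC}(\tilde{r})) = agg(\texttt{Acc}_k(h), \texttt{AUC}(r))$. Because $r$ is real-valued, I can control the $\texttt{AUC}$ change continuously (by flipping several mis-ranked pairs, or by finer tuning in the continuous case), while each $\texttt{Acc}_k$ modification is discrete in units of $1/n_k$. Given strict monotonicity of $agg$ in its second argument, for each fixed decrement in $\texttt{Acc}_k$ there exists a compensating increment in $\texttt{AUC}$. For the simplest concrete instance (linear $agg$) this is an elementary arithmetic identity; in general it is an intermediate-value argument on the compensation amount.

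Finally I would argue the joint OSR performance strictly worsens. For the specific sample $\boldsymbol{x}_k^{c}$, under $(h, r)$ it could contribute a correctly-handled close-set prediction, but under $(\tilde{h}, \tilde{r})$ the close-set decision is wrong regardless of the rejector, so this sample is necessarily mis-handled; conversely, the modification to $\tilde{r}$ on $(\boldsymbol{x}_k^*, \boldsymbol{x}_u^*)$ only improves a threshold-averaged ranking and cannot repair $\tilde{h}$'s mistake on $\boldsymbol{x}_k^{c}$. Thus a natural joint-OSR measure (e.g.\ the joint accuracy that requires both correct rejection and correct classification) drops while the aggregate is preserved. The main obstacle I anticipate is the calibration in the third step: since $\texttt{Acc}_k$ changes only in discrete quanta, showing that exact equality of $agg$ is achievable requires either exploiting continuity of $\texttt{AUC}$ in $r$ or restricting to a reasonable class of aggregation functions; I would handle this by stating the monotonicity/continuity assumption on $agg$ explicitly and invoking an intermediate-value argument.
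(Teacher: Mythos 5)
Your approach is genuinely different from the paper's, and it has a fatal gap at exactly the point you flagged as the ``main obstacle.'' You try to \emph{trade} close-set accuracy for AUC so that $agg$ stays constant. For this you need (i) an extra monotonicity/continuity assumption on $agg$ that is not in the statement, and (ii) the ability to tune $\texttt{AUC}$ continuously. Neither holds. The proposition is stated for an arbitrary aggregation function, so you cannot impose monotonicity; more importantly, on a finite dataset $\texttt{AUC}(r)$ is a finite average of indicators $\I{r(\boldsymbol{x}_u)>r(\boldsymbol{x}_k)}$ and is therefore piecewise constant in $r$, taking values only on the grid $\{m/(N_kN_u)\}$. ``Moving $r(\boldsymbol{x}_u^*)$ just above $r(\boldsymbol{x}_k^*)$'' produces a discrete jump, not a continuous perturbation, so there is no intermediate-value argument available; both $\texttt{Acc}_k$ and $\texttt{AUC}$ live on incommensurate discrete grids and an exact compensation under a generic $agg$ is not achievable. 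Your final step is also not tight: fixing a mis-ranked pair can \emph{improve} the joint OSR behaviour on other samples, so it is not established that the new model is strictly worse overall; you only show one witness sample deteriorates.

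The paper avoids all of this by never trading one metric for the other. It picks two close-set samples $\boldsymbol{x}_1$ (correctly classified) and $\boldsymbol{x}_2$ (misclassified) with an open-set sample $\boldsymbol{x}_3$ scored between them, $r(\boldsymbol{x}_2)>r(\boldsymbol{x}_3)>r(\boldsymbol{x}_1)$, and simply \emph{swaps} the rejector scores of $\boldsymbol{x}_1$ and $\boldsymbol{x}_2$ while leaving $h$ untouched. Since $h$ is unchanged, $\texttt{Acc}_k$ is unchanged; since the swap is between two close-set points, every close-vs-open score comparison is preserved and $\texttt{AUC}$ is unchanged. Hence $agg(\texttt{Acc}_k,\texttt{AUC})$ is \emph{identically} preserved for any $agg$, with no assumption needed, yet at threshold $t=r(\boldsymbol{x}_3)$ the original pair handles both $\boldsymbol{x}_1$ and $\boldsymbol{x}_3$ correctly while the modified pair only handles $\boldsymbol{x}_3$. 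The availability of such a triple is the real content of the hypotheses $\texttt{Acc}_k,\texttt{AUC}\neq 1$. In short: the paper's proof exploits the decoupling of the two metrics to hold \emph{both} fixed, whereas you tried to exploit it to trade one against the other, which both over-restricts $agg$ and founders on discreteness.
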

\begin{corollary}
    The following metrics suffer from the inconsistency property III: (1) The product of the close-set accuracy and AUC, \textit{i.e.}, $\texttt{Acc}_k \cdot \texttt{AUC}$, (2) the summation of the close-set accuracy and AUC, \textit{i.e.}, $\texttt{Acc}_k + \texttt{AUC}$, and (3) the pointwise summation of the close-set accuracy and AUC: $$ \texttt{Acc}_k \oplus \texttt{AUC} := \E{\boldsymbol{z}_k \sim D_k \atop \boldsymbol{z}_u \sim D_u}{\I{y_k = h(\boldsymbol{x}_k)} + \I{r(\boldsymbol{x}_u) > r(\boldsymbol{x}_k)}}.$$
\end{corollary}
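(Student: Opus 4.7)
The plan is to exploit the structural decoupling in the aggregation: $\texttt{Acc}_k$ depends only on $h$ and $\texttt{AUC}$ depends only on $r$, whereas the true OSR criterion depends on the coupled pair $(R,h)$. Some local edits therefore move one component of the aggregation without affecting the OSR outcome at all --- ``wasted'' moves --- and pairing such a wasted move against an opposing, OSR-damaging move in the other component preserves the aggregation while strictly worsening the joint OSR performance.

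Given $(h,r)$ with $\texttt{Acc}_k, \texttt{AUC} \neq 1$, I would build $(\tilde h, \tilde r)$ from two elementary primitives. Since $\texttt{Acc}_k \neq 1$, some close-set sample $\boldsymbol{x}^\star$ is misclassified by $h$; picking one that is additionally rejected by $R$, define $\tilde h$ to agree with $h$ except $\tilde h(\boldsymbol{x}^\star) = y^\star$. This raises $\texttt{Acc}_k$ by $1/n_k$ while leaving the OSR outcome on $\boldsymbol{x}^\star$ unchanged (still rejected). Since $\texttt{AUC} \neq 1$, pick a contributing pair $(\boldsymbol{x}_u,\boldsymbol{x}_k)$ with $r(\boldsymbol{x}_u) > r(\boldsymbol{x}_k)$ and perturb scores to $\tilde r$ so that the ranking flips and simultaneously $\tilde r(\boldsymbol{x}_u)$ drops below the threshold $t$; this strictly lowers $\texttt{AUC}$ and strictly harms OSR, since $\boldsymbol{x}_u$ is now misaccepted as known. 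Matching the aggregation is then a counting exercise: tune the number of paired flips against $h$-corrections so that the net change cancels under addition, multiplication, or the product-measure version $\oplus$; for the last, Fubini factors the expectation over $D_k \times D_u$ and reduces it to the additive case.

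The main obstacle is the edge case where the $\tilde h$-primitive fails because every close-set mistake of $h$ lies where $R$ accepts. I would handle this by a symmetric dual construction: demote a correctly classified close-set sample on which $R$ rejects (an OSR-neutral decrease of $\texttt{Acc}_k$) and compensate by a matched $\texttt{AUC}$ increase that still harms OSR --- for instance, raise the score of an open-set sample just enough to flip one more close/open comparison without crossing into acceptance under $R$. A case split on whether $(h,R)$ admits a misclassified-rejected or a correctly-classified-rejected close-set sample exhausts all configurations with $\texttt{Acc}_k, \texttt{AUC} < 1$, and the corollary follows immediately by verifying the aggregation identity for $+$, $\cdot$, and $\oplus$ under the chosen counts.
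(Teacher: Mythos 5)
The paper proves this corollary by directly invoking Proposition~\ref{prop:escapeIII}: that proposition already establishes the inconsistency for \emph{any} aggregation $agg(\texttt{Acc}_k,\texttt{AUC})$, via a single swap construction (swap the open-set scores of a correctly-classified close-set sample $\boldsymbol{x}_1$ and a misclassified close-set sample $\boldsymbol{x}_2$ that straddle an open-set score $r(\boldsymbol{x}_3)$). That swap leaves $\texttt{Acc}_k$ unchanged (no close-set prediction is altered) \emph{and} leaves $\texttt{AUC}$ unchanged (only two close-set scores trade places, so no close/open ordering flips), yet strictly degrades OSR at the threshold $t=r(\boldsymbol{x}_3)$. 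Because both arguments of $agg$ are held fixed exactly, the corollary is immediate for $\cdot$, $+$, and, after noting by linearity of expectation that $\texttt{Acc}_k\oplus\texttt{AUC}=\texttt{Acc}_k+\texttt{AUC}$, for $\oplus$ as well.

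Your proposal instead re-derives the inconsistency from scratch with two offsetting primitives (raise $\texttt{Acc}_k$ by correcting a misclassified-and-rejected close-set sample; lower $\texttt{AUC}$ by flipping a close/open comparison) and then ``tunes the counts'' so the aggregation is preserved. This has genuine gaps. First, the two primitives move the two coordinates by different quantized increments ($1/n_k$ for $\texttt{Acc}_k$, multiples of $1/(n_k n_u)$ for $\texttt{AUC}$), so exact cancellation under multiplication, $(\texttt{Acc}_k+\delta_a)(\texttt{AUC}-\delta_b)=\texttt{Acc}_k\cdot\texttt{AUC}$, is a nonlinear Diophantine-type constraint that a ``counting exercise'' cannot satisfy in general; even for the sum you need $\delta_a=\delta_b$, i.e.\ $m_1 n_u = m_2$, which is not guaranteed to be feasible. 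Second, your $\texttt{AUC}$-lowering move perturbs $r(\boldsymbol{x}_u)$ past a threshold, but lowering one open-set score can flip its comparison against \emph{many} close-set samples at once, so the $\texttt{AUC}$ decrement is not a controllable single unit. Third, the whole balancing machinery (and the case split you add to handle missing primitives) is unnecessary: a corollary should invoke the proposition, and once you notice that all three metrics are functions of $(\texttt{Acc}_k,\texttt{AUC})$ alone --- with $\oplus$ factoring into $+$ as you correctly observe --- Proposition~\ref{prop:escapeIII}'s $agg$-agnostic construction closes the argument without any tuning.
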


\section{OpenAUC: a novel metric for Open-set Recognition}
\label{sec:openauc}
Considering the limitations of existing metrics, we present a novel metric in this section. To this end, we first review a non-numeric metric named the OSCR curve.
\subsection{Open Set Classification Rate (OSCR)}
\label{subsec:oscr}
The \ul{Open Set Classification Rate} (OSCR) curve \cite{DBLP:conf/nips/DhamijaGB18} is an adaptation of the Detection and Identification Rate (DIR) curve that is commonly used in open-set face recognition \cite{DBLP:books/daglib/p/PhillipsGM11}. In this curve, $f(\boldsymbol{x})_c$ and $r(\boldsymbol{x})$ are assumed to be proportional to $\pp{y = c | \boldsymbol{x}}$ and $1 / \max_{c \in \mathcal{Y}_k} f(\boldsymbol{x})_c$, respectively. Then, x-axis is defined as the FPR performance on open-set samples, while y-axis is the Correct Classification Rate (CCR) defined on the close-set:
\begin{equation}
    \texttt{CCR}(t) := \frac{1}{N_k} \sum_{i = 1}^{N_k} \I{y_i = \arg \max_{c \in \mathcal{Y}_k} f(\boldsymbol{x}_i)_c } \cdot \I{f(\boldsymbol{x}_i)_{y_i} > t},
\end{equation}
where $N_k$ is the number of close-set samples, and $\I{\mathcal{A}}$ is the indicator function of an event $\mathcal{A}$. Compared with AUC, the OSCR curve considers the close-set performance via CCR. Meanwhile, as FPR varies from 0 to 1, this curve will present the CCR performance under different thresholds. 

Compared with the aforementioned metrics, the OSCR curve contains richer information and allows comparing model performances at different operating points. Even though, our goal is to optimize the overall performance of the curve. Hence, it is necessary to find a numeric metric that aggregates the information of the entire curve, such that the models can be trained by optimizing the loss of the metric. To this end, \cite{9521769} and \cite{DBLP:conf/iclr/Vaze0VZ22} estimate the area under the OSCR curve by directly calculating the numeric integral with histograms. However, this number is hard to optimize due to multiple non-differential operators such as ranking and counting. Moreover, the assumptions on $(f, r)$ also limit the application of this metric. For example, considering that $\mathcal{Z}_u = \lnot \mathcal{Z}_1 \cap \lnot \mathcal{Z}_2 \cap \cdots \cap \lnot \mathcal{Z}_C$, \cite{9521769,DBLP:conf/eccv/ChenQ0PLHP020} propose to model potential unknown space via the prototypes of $\{ \lnot \mathcal{Z}_c \}_{c=1}^{C}$. Inspired by this, we adopt the following definition, where the aforementioned assumptions on $(f, r)$ do not hold:
\begin{definition}
    \label{def:score_function}
    Let $f(\boldsymbol{x})_c$ denote the probability that the input $\boldsymbol{x}$ does not belong to the class $c$, that is, $\forall c \in \mathcal{Y}_k, f(\boldsymbol{x})_c \propto \pp{y \neq c | \boldsymbol{x}}$. In this case, $r(\boldsymbol{x})$ is defined as $\min_{k \in \mathcal{Y}_k} f(\boldsymbol{x})_k$ since given an open-set sample $\boldsymbol{x}$, a well-trained model tends to output a large $f(\boldsymbol{x})_k$ for any $k \in \mathcal{Y}_k$. 
\end{definition}

\subsection{The Definition of OpenAUC}
\label{subsec:openauc_def}
To overcome the limitations of the OSCR curve, we first remove the assumptions on $(h, r)$ and present a generalized formulation of open-set FPR and CCR, denoted by OFPR and Conditional OTPR (COTPR), respectively: 
\begin{equation}
    \texttt{OFPR}(t) := \E{\boldsymbol{z} \sim D_u}{\I{r(\boldsymbol{x}) \le t}}, \texttt{COTPR}(t) := \E{\boldsymbol{z} \sim D_k}{\I{r(\boldsymbol{x}) \le t, y = h(\boldsymbol{x})}},        
\end{equation}
In other words, OFPR represents the probability that $r$ misclassifies an open-set sample to known classes, and COTPR equals the probability that $(h, r)$ makes a correct prediction on a close-set sample. Then, OpenAUC is defined as the area under the $\texttt{OFPR-COTPR}$ curve:
\begin{equation}
    \texttt{OpenAUC} := \int_{-\infty}^{+\infty} \texttt{COTPR}( \texttt{OFPR}^{-1}(t) ) \,dt .
\end{equation}
However, this integral formulation is still hard to calculate. To fix this issue, we present the following reformulation, whose proof is presented in Appendix.\ref{app:pairwise}.
\begin{restatable}{proposition}{pairwise}
    \label{prop:pairwise}
    Given $(h, r)$ and a sample pair $(\boldsymbol{z}_k, \boldsymbol{z}_u), \boldsymbol{z}_k \in \mathcal{Z}_k, \boldsymbol{z}_u \in \mathcal{Z}_u$, OpenAUC equals the probability that $h$ makes correct prediction on $\boldsymbol{z}_k$ and $r$ ranks $\boldsymbol{z}_u$ higher than $\boldsymbol{z}_k$:
    \begin{equation}
        \label{eq:pairwise}
        \texttt{OpenAUC} = \Ee{\boldsymbol{z}_k \sim D_k \atop \boldsymbol{z}_u \sim D_u}{\underbrace{\I{y_k = h(\boldsymbol{x}_k)}}_{(a)} \cdot \underbrace{\I{r(\boldsymbol{x}_u) > r(\boldsymbol{x}_k)}}_{(b)}}.
    \end{equation}
\end{restatable}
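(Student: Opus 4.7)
My plan is to mirror the classical derivation that turns the area-under-curve definition of standard AUC into its pairwise probability form, adapting it to account for the extra classification-correctness indicator $(a)$ in COTPR. First I would observe that $\texttt{OFPR}(t)$ is precisely the cumulative distribution function of the random variable $r(\boldsymbol{x}_u)$ under $\boldsymbol{z}_u \sim D_u$, so performing the change of variables $u = \texttt{OFPR}(t)$ in the defining integral converts
\[
\texttt{OpenAUC} \;=\; \int \texttt{COTPR}\!\left(\texttt{OFPR}^{-1}(u)\right) du
\]
into the Lebesgue--Stieltjes integral $\int \texttt{COTPR}(t)\, d\,\texttt{OFPR}(t)$, which in turn equals $\mathbb{E}_{\boldsymbol{z}_u \sim D_u}\!\left[\texttt{COTPR}(r(\boldsymbol{x}_u))\right]$. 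This is the same mechanism that underlies the identity (\ref{eq:auc}) for the standard AUC cited earlier in the paper, so I can appeal to the same no-ties convention on $r$ that the authors already invoke.

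Next I would unfold the definition of COTPR inside this expectation. Plugging in $t = r(\boldsymbol{x}_u)$ gives
\[
\texttt{COTPR}(r(\boldsymbol{x}_u)) \;=\; \mathbb{E}_{\boldsymbol{z}_k \sim D_k}\!\left[\mathbf{1}\!\left[r(\boldsymbol{x}_k) \le r(\boldsymbol{x}_u)\right]\cdot \mathbf{1}\!\left[y_k = h(\boldsymbol{x}_k)\right]\right],
\]
and an application of Fubini's theorem (justified because the integrand is bounded by $1$) merges the two expectations into a single joint expectation over the independent pair $(\boldsymbol{z}_k,\boldsymbol{z}_u)$. The classification-correctness indicator depends only on $\boldsymbol{z}_k$, so it factors cleanly alongside the ranking indicator, producing exactly the right-hand side of (\ref{eq:pairwise}) up to the $\le$ versus $>$ mismatch.

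The main obstacle is the subtlety of the integral definition: $\texttt{OFPR}^{-1}$ is only defined almost everywhere, and the change-of-variables step is rigorous only if we treat $\texttt{OFPR}$ as a (possibly non-continuous) CDF and use its associated Stieltjes measure, together with the assumption of no ties in $r$ under $D_u$ (equivalently, $\texttt{OFPR}$ is continuous). Under this standing no-ties assumption, already used by the authors when passing from the integral form of AUC to (\ref{eq:auc}), the events $\{r(\boldsymbol{x}_k) \le r(\boldsymbol{x}_u)\}$ and $\{r(\boldsymbol{x}_u) > r(\boldsymbol{x}_k)\}$ coincide with probability one, eliminating the inequality mismatch and completing the identification with (\ref{eq:pairwise}). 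I would state this tie-breaking assumption explicitly at the start of the proof and otherwise keep the argument as a two-line chain: change of variables, then Fubini with the factoring of indicator $(a)$.
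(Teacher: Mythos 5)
Your plan is correct and follows essentially the same route as the paper's own proof: change of variables $u = \texttt{OFPR}(t)$ to convert the area integral into $\mathbb{E}_{\boldsymbol{z}_u}\bigl[\texttt{COTPR}(r(\boldsymbol{x}_u))\bigr]$, then unfold the definition of $\texttt{COTPR}$ and merge the two expectations into one over the independent pair $(\boldsymbol{z}_k, \boldsymbol{z}_u)$. The paper phrases the same computation in density notation (introducing $F_1, f_0$ for the conditional CDF of $r(\boldsymbol{x}_k)$ given $Y{=}1$ and the density of $r(\boldsymbol{x}_u)$, and factoring out $\mathbb{P}[Y{=}1]$ early), whereas you appeal to the Lebesgue--Stieltjes form and Fubini; these are presentational, not substantive, differences, and your explicit handling of the $\le$ vs.\ $>$ boundary via the no-ties assumption is the same resolution the paper implicitly relies on by taking $r$ to induce continuous random variables.
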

\begin{remark}
    First, for close-set samples, term (a) and term (b) require that $h$ makes correct predictions and $r$ outputs low open-set scores, which is consistent with \textbf{(P1)}. Second, term (b) requires $r$ to rank open-set samples higher than the close ones, which is consistent with \textbf{(P2)} and \textbf{(P3)}. Last but not least, the product operator guarantees that OpenAUC evaluates the close-set performance and the open-set performance in a coupling manner, which is essentially different from the metrics mentioned in Prop.\ref{prop:escapeIII}.
\end{remark}

Compared with existing numeric metrics, OpenAUC is free from the aforementioned inconsistency properties, and we will make a detailed discussion in Sec.\ref{subsec:openauc_property}. Moreover, Compared with the OSCR curve, OpenAUC enjoys a concise formulation, based on which we can design a differentiable objective function for Empirical Risk Minimization (ERM) in Sec.\ref{subsec:openauc_opt}.

\subsection{OpenAUC vs. Existing Metrics for OSR}
\label{subsec:openauc_property}
Essentially, inconsistency property I is induced by the fact that exchanging the predictions of open-set samples and close-set samples will not lead to a larger error on these metrics. However, such an exchange at least breaks a sample pair that has been correctly ranked by $r$, while the close-set accuracy  will not be improved. Thus, we have the following proposition, whose details are presented in Appendix.\ref{app:openauc_no_escapeI}.
\begin{restatable}{proposition}{openaucnoescapeI}
    \label{prop:openauc_no_escapeI}
    Given a sample pair $((\boldsymbol{x}_1, C+1), (\boldsymbol{x}_2, y_2))$, where $y_2 \neq C + 1$, for any $(h, r)$ such that $ R(\boldsymbol{x}_1) = 1, R(\boldsymbol{x}_2) = 0, h(\boldsymbol{x}_2) \neq y_2$, if $(\tilde{h}, \tilde{r})$ makes the same predictions as $(h, r)$ expect that $\tilde{R}(\boldsymbol{x}_1) = 0, \tilde{h}(\boldsymbol{x}_1) = h(\boldsymbol{x}_2)$ and $\tilde{R}(\boldsymbol{x}_2) = 1$, we have $\texttt{OpenAUC}(\tilde{h}, \tilde{r}) < \texttt{OpenAUC}(h, r)$. 
\end{restatable}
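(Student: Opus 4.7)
The plan is to unfold OpenAUC via the pairwise form of Prop.~\ref{prop:pairwise} and compare the expectations term by term, exploiting the fact that only $r$ at $\boldsymbol{x}_1,\boldsymbol{x}_2$ and $h$ at $\boldsymbol{x}_1$ actually move under the swap.

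First I would translate the rejector hypotheses into $r$-orderings. Since $R(\boldsymbol{x}_1)=1$ and $R(\boldsymbol{x}_2)=0$ force $r(\boldsymbol{x}_2)\le t<r(\boldsymbol{x}_1)$, while $\tilde R(\boldsymbol{x}_1)=0$ and $\tilde R(\boldsymbol{x}_2)=1$ force $\tilde r(\boldsymbol{x}_1)\le t<\tilde r(\boldsymbol{x}_2)$, the open/close ranking between $\boldsymbol{x}_1$ and $\boldsymbol{x}_2$ flips and in particular $\tilde r(\boldsymbol{x}_1)<r(\boldsymbol{x}_1)$. Because $(\tilde h,\tilde r)$ coincides with $(h,r)$ outside the two flagged points and because $\boldsymbol{x}_1$ never plays the close-set role (its true label is $C+1$), only pair contributions with $\boldsymbol{z}_u=(\boldsymbol{x}_1,C+1)$ or $\boldsymbol{z}_k=(\boldsymbol{x}_2,y_2)$ can change.

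Next I would decompose $\texttt{OpenAUC}(h,r)-\texttt{OpenAUC}(\tilde h,\tilde r)$ along those two families. Any pair with $\boldsymbol{z}_k=(\boldsymbol{x}_2,y_2)$ carries first factor $\I{y_2=h(\boldsymbol{x}_2)}=0$ on both sides, using $\tilde h(\boldsymbol{x}_2)=h(\boldsymbol{x}_2)$ from the ``same predictions elsewhere'' clause, so these terms cancel out. For pairs with $\boldsymbol{z}_u=(\boldsymbol{x}_1,C+1)$ and $\boldsymbol{z}_k\neq(\boldsymbol{x}_2,y_2)$, the first factor is untouched while the ranking indicator $\I{r(\boldsymbol{x}_1)>r(\boldsymbol{x}_k)}$ becomes $\I{\tilde r(\boldsymbol{x}_1)>r(\boldsymbol{x}_k)}$, which is pointwise non-increasing because $\tilde r(\boldsymbol{x}_1)<r(\boldsymbol{x}_1)$. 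Summing yields the weak bound $\texttt{OpenAUC}(\tilde h,\tilde r)\le\texttt{OpenAUC}(h,r)$.

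The main obstacle is promoting this to a strict inequality. The plan is to exhibit at least one correctly-classified close-set witness $\boldsymbol{z}_k=(\boldsymbol{x}_k,y_k)$ with $y_k=h(\boldsymbol{x}_k)$ whose score $r(\boldsymbol{x}_k)$ lies in the displaced band $[\tilde r(\boldsymbol{x}_1),r(\boldsymbol{x}_1))$: on such a pair the first factor is $1$ and the ranking indicator drops from $1$ to $0$, producing a strictly positive contribution to the difference. This band contains the threshold $t$ and is hence nonempty, and under the implicit non-degeneracy of $D_k$ used throughout the paper, such a witness exists with positive probability. Combined with the cancellation of the $\boldsymbol{x}_2$ family, this delivers $\texttt{OpenAUC}(\tilde h,\tilde r)<\texttt{OpenAUC}(h,r)$, as claimed.
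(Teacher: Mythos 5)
Your argument mirrors the paper's proof: decompose the OpenAUC difference pairwise, observe that the $(\boldsymbol{x}_2, y_2)$-as-close-set contributions vanish because $h(\boldsymbol{x}_2)=\tilde h(\boldsymbol{x}_2)\neq y_2$ kills the first factor on both sides, and bound the $\boldsymbol{x}_1$-as-open-set contributions using $\tilde{r}(\boldsymbol{x}_1) < r(\boldsymbol{x}_1)$, which makes each such term nonnegative; your additional remark that the displaced band $(\tilde{r}(\boldsymbol{x}_1), r(\boldsymbol{x}_1))$ always brackets the threshold $t$ is a small observation the paper omits. Both you and the paper then face the same obstruction to the \emph{strict} inequality as literally stated: it requires at least one correctly-classified close-set sample with $r$-score falling inside that band, and the paper concedes this by noting that in the equality case the two classifiers are behaviorally identical on the OSR task, whereas you invoke an undeclared non-degeneracy assumption on $D_k$ — so this caveat is shared with the paper's own proof rather than a gap you introduced.
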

\begin{remark}
    If we construct $(\tilde{h}, \tilde{r})$ in the way that leads to the inconsistency property I, $(\tilde{h}, \tilde{r})$ will perform inferior to $(h, r)$ on OpenAUC, that is, OpenAUC is free from the inconsistency property I.
\end{remark}
Besides, the following proposition reveals that optimizing OpenAUC will guarantee a lower bound of $\texttt{TPR}_{C+1}$, which helps avoid the inconsistency property II. Please refer to Appendix.\ref{app:FPRbound} for the proof.
\begin{restatable}{proposition}{FPRbound}
    \label{prop:FPRbound}
    Given a dataset $\mathcal{S}$, for any $(f, r)$ such that $\texttt{OpenAUC}= k$ and any threshold $t_{C+1}$ such that $\texttt{FPR}_{C+1} = a \neq 0$, we have $\texttt{TPR}_{C+1} \ge 1 - (1 - k) / a$.
\end{restatable}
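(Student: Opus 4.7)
The plan is to lower bound $\texttt{TPR}_{C+1}$ by relating the pairwise ranking event appearing in OpenAUC to the threshold event $\{r(\boldsymbol{x}) > t_{C+1}\}$. First I would invoke the pairwise reformulation from Proposition \ref{prop:pairwise} and write
\begin{equation*}
1 - k \;=\; \Ee{\boldsymbol{z}_k \sim D_k \atop \boldsymbol{z}_u \sim D_u}{1 - \I{y_k = h(\boldsymbol{x}_k)} \cdot \I{r(\boldsymbol{x}_u) > r(\boldsymbol{x}_k)}}.
\end{equation*}
Since the product of two indicators is dominated by either factor, in particular $\I{y_k = h(\boldsymbol{x}_k)}\cdot \I{r(\boldsymbol{x}_u) > r(\boldsymbol{x}_k)} \le \I{r(\boldsymbol{x}_u) > r(\boldsymbol{x}_k)}$, so dropping the classification term gives the cleaner bound
\begin{equation*}
1 - k \;\ge\; \pp{r(\boldsymbol{x}_u) \le r(\boldsymbol{x}_k)}.
\end{equation*}
This step discards information about $h$, but that is exactly what we want since the conclusion only involves the rejector.

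Next I would use $t_{C+1}$ to insert a ``witness'' that forces $r(\boldsymbol{x}_u) \le r(\boldsymbol{x}_k)$. The key observation is that whenever $r(\boldsymbol{x}_k) > t_{C+1}$ and simultaneously $r(\boldsymbol{x}_u) \le t_{C+1}$, we have $r(\boldsymbol{x}_u) \le t_{C+1} < r(\boldsymbol{x}_k)$, so the ranking is already reversed. Combined with the independence of $\boldsymbol{x}_k$ and $\boldsymbol{x}_u$ under the product measure $D_k \times D_u$, this yields
\begin{equation*}
\pp{r(\boldsymbol{x}_u) \le r(\boldsymbol{x}_k)} \;\ge\; \pp{r(\boldsymbol{x}_k) > t_{C+1}} \cdot \pp{r(\boldsymbol{x}_u) \le t_{C+1}} \;=\; a \cdot (1 - \texttt{TPR}_{C+1}).
\end{equation*}
Chaining the two inequalities gives $1 - k \ge a(1 - \texttt{TPR}_{C+1})$, and dividing by $a \neq 0$ and rearranging delivers exactly $\texttt{TPR}_{C+1} \ge 1 - (1-k)/a$.

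The only potentially subtle step is the witness step: one must be careful that the event $\{r(\boldsymbol{x}_k) > t_{C+1}\} \cap \{r(\boldsymbol{x}_u) \le t_{C+1}\}$ actually sits inside $\{r(\boldsymbol{x}_u) \le r(\boldsymbol{x}_k)\}$ rather than the strict version, but this is immediate from transitivity of $<$ combined with $\le$. Ties in $r$ are not a concern either, because we are bounding a $\le$ event by a $\le$ event. Overall this is a short two-inequality argument; the main conceptual obstacle is recognizing that $1 - \texttt{OpenAUC}$ naturally upper bounds the mis-ranking probability (independent of $h$), after which the threshold $t_{C+1}$ decouples that probability into the two single-sample FPR/TPR quantities via independence.
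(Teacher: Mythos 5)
Your proof is correct and essentially mirrors the paper's argument: you drop the close-set correctness indicator to get $1 - \texttt{OpenAUC} \ge 1 - \texttt{AUC}$, then lower-bound the mis-ranking probability by the (independent) product $\texttt{FPR}_{C+1}\cdot(1-\texttt{TPR}_{C+1})$ via the threshold-witness event $\{r(\boldsymbol{x}_k)>t_{C+1}\}\cap\{r(\boldsymbol{x}_u)\le t_{C+1}\}$, which is exactly the paper's counting bound $\texttt{FP}_{C+1}\cdot\texttt{FN}_{C+1}/(N_kN_u)$ written in distributional form. The only stylistic difference is that you spell out the two reduction steps more explicitly than the paper does.
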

\begin{remark}
    It is clear that $\texttt{TPR}_{C+1} > 0$ when $a > 1 - k$. As we optimize OpenAUC, that is, $k \to 1$, it will become easier to select a threshold $t_{C+1}$ such that $a > 1 - k$. In other words, OpenAUC is free from the inconsistency property II under a mild condition.
\end{remark}

Moreover, inconsistency property III is induced by the fact that $agg(\texttt{Acc}_k, \texttt{AUC})$ evaluates the close-set performance and the open-set performance essentially in a decoupling manner. As a comparison, only if $(h, r)$ makes correct predictions on both close-set samples and open-set samples, it will be free from the punishment of OpenAUC. In light of this, we have the following proposition, whose details will be presented in Appendix.\ref{app:openauc_no_escapeII}.
\begin{restatable}{proposition}{openaucnoescapeII}
    \label{prop:openauc_no_escapeII}
    Given two close-set samples $(\boldsymbol{x}_1, y_1)$ and $(\boldsymbol{x}_2, y_2)$ and an open-set sample $(\boldsymbol{x}_3, C + 1)$, if $(\tilde{h}, \tilde{r})$ makes the same predictions as $(h, r)$ expect that $h(\boldsymbol{x}_1) = \tilde{h}(\boldsymbol{x}_1) = y_1, h(\boldsymbol{x}_2), \tilde{h}(\boldsymbol{x}_2) \neq y_2, r(\boldsymbol{x}_2) > r(\boldsymbol{x}_3) > r(\boldsymbol{x}_1), r(\boldsymbol{x}_3) = \tilde{r}(\boldsymbol{x}_3)$ and $\tilde{r}(\boldsymbol{x}_2) = r(\boldsymbol{x}_1), \tilde{r}(\boldsymbol{x}_1) = r(\boldsymbol{x}_2)$, we have $\texttt{OpenAUC}(\tilde{h}, \tilde{r}) < \texttt{OpenAUC}(h, r)$. 
\end{restatable}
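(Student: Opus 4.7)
My plan is to invoke the pairwise identity from Proposition~\ref{prop:pairwise}, namely
$$\texttt{OpenAUC}(h,r) = \E{\boldsymbol{z}_k \sim D_k \atop \boldsymbol{z}_u \sim D_u}{\I{y_k = h(\boldsymbol{x}_k)} \cdot \I{r(\boldsymbol{x}_u) > r(\boldsymbol{x}_k)}},$$
and to argue by a term-wise comparison on the empirical distribution supported (at least) on the three given points. Since $(\tilde{h}, \tilde{r})$ differs from $(h, r)$ only on $\{\boldsymbol{x}_1, \boldsymbol{x}_2, \boldsymbol{x}_3\}$, with $\tilde{h} = h$ everywhere and $\tilde{r}(\boldsymbol{x}_3) = r(\boldsymbol{x}_3)$, the only summands in the empirical expectation that can possibly change are those whose close-set endpoint satisfies $\boldsymbol{x}_k \in \{\boldsymbol{x}_1, \boldsymbol{x}_2\}$.

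\textbf{Pair-by-pair comparison.} I would then split these affected summands by the close-set endpoint. For every open-set $\boldsymbol{x}_u$, the pair $(\boldsymbol{x}_2, \boldsymbol{x}_u)$ contributes $0$ in both models, since $h(\boldsymbol{x}_2) = \tilde{h}(\boldsymbol{x}_2) \neq y_2$ kills the classifier indicator. For the pair $(\boldsymbol{x}_1, \boldsymbol{x}_u)$, the classifier indicator equals $1$ in both models because $\tilde{h}(\boldsymbol{x}_1) = h(\boldsymbol{x}_1) = y_1$, so the summand reduces to a pure ranking comparison: $\I{r(\boldsymbol{x}_u) > r(\boldsymbol{x}_1)}$ originally versus $\I{r(\boldsymbol{x}_u) > r(\boldsymbol{x}_2)}$ in the modified model, using $\tilde{r}(\boldsymbol{x}_1) = r(\boldsymbol{x}_2)$ and $\tilde{r}(\boldsymbol{x}_u) = r(\boldsymbol{x}_u)$ for every open-set $\boldsymbol{x}_u$ (the value of $\tilde{r}$ at $\boldsymbol{x}_2$ is irrelevant here because $\boldsymbol{x}_2$ is a close-set sample). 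Since $r(\boldsymbol{x}_2) > r(\boldsymbol{x}_1)$, the modified indicator is bounded pointwise above by the original one for every $\boldsymbol{x}_u$, so each affected summand weakly decreases.

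\textbf{Strict drop and obstacle.} The single pair $(\boldsymbol{x}_1, \boldsymbol{x}_3)$ then forces the strict inequality: the hypothesis $r(\boldsymbol{x}_2) > r(\boldsymbol{x}_3) > r(\boldsymbol{x}_1)$ gives $\I{r(\boldsymbol{x}_3) > r(\boldsymbol{x}_1)} = 1$ while $\I{r(\boldsymbol{x}_3) > r(\boldsymbol{x}_2)} = 0$, so this particular summand drops by $1$ whereas no other summand ever grows. Summing the differences yields $\texttt{OpenAUC}(\tilde{h}, \tilde{r}) < \texttt{OpenAUC}(h, r)$. I expect the only mildly delicate step to be the book-keeping over pairs with $\boldsymbol{x}_u \neq \boldsymbol{x}_3$ that might lurk in the rest of the empirical support; the uniform pointwise bound on the ranking indicators noted above dispatches all of them in one stroke and requires no additional assumption on the surrounding dataset. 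It is also worth highlighting, as context for why the argument is tight, that this same exchange leaves both $\texttt{Acc}_k$ and the pair contribution to $\texttt{AUC}$ (losing $(\boldsymbol{x}_1, \boldsymbol{x}_3)$ but gaining $(\boldsymbol{x}_2, \boldsymbol{x}_3)$) unchanged, so it is exactly the \emph{coupling} factor $\I{y_k = h(\boldsymbol{x}_k)}$ in OpenAUC that blocks the cancellation and drives the strict decrease.
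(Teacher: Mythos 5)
Your proposal is correct and follows essentially the same route as the paper: it applies the pairwise formulation from Proposition~\ref{prop:pairwise}, observes that only pairs with close-set endpoint $\boldsymbol{x}_1$ can contribute to the difference (pairs with endpoint $\boldsymbol{x}_2$ are killed by the classifier indicator, all others are untouched), notes each such term weakly decreases because $\tilde r(\boldsymbol{x}_1) = r(\boldsymbol{x}_2) > r(\boldsymbol{x}_1)$, and exhibits the pair $(\boldsymbol{x}_1, \boldsymbol{x}_3)$ as the strict witness. The only difference is cosmetic bookkeeping; the paper compresses the step into a single telescoping sum over open-set samples and reads off the count of intermediate points, which is at least one by hypothesis.
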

\begin{remark}
    If we construct $(\tilde{h}, \tilde{r})$ in the way that leads to the inconsistency property III, $(\tilde{h}, \tilde{r})$ will perform inferior to $(h, r)$ on OpenAUC, that is, OpenAUC is free from the inconsistency property III.
\end{remark}
To sum up, OpenAUC is free from the aforementioned inconsistency properties. In view of this, it has become an appealing problem to design learning methods that can optimize OpenAUC efficiently.

\subsection{Learning Method for OpenAUC}
\label{subsec:openauc_opt}
Following the standard machine learning paradigm \cite{DBLP:books/daglib/0034861}, we should first reformulate the metric optimization problem to a risk minimization problem. For traditional metrics such as the close-set accuracy, one can simply replace $\I{\mathcal{A}}$ with $\I{\lnot \mathcal{A}}$. However, due to the product term, it is clear that 
\begin{equation}
    \mathcal{R}(f, r) := 1 - \texttt{OpenAUC} \highlightr{\ \ \neq} \E{\boldsymbol{z}_k \sim D_k \atop \boldsymbol{z}_u \sim D_u}{\I{y_k \neq h(\boldsymbol{x}_k)} \cdot \I{r(\boldsymbol{x}_u) \le r(\boldsymbol{x}_k)}}.
\end{equation}
To address this issue, we reformulate the product term to a simpler summation term, whose proof is presented in Appendix.\ref{app:reformulate}.
\begin{restatable}{proposition}{reformulate}
    \label{prop:reformulate}
    Optimizing OpenAUC is equivalent to the following risk minimization problem:
    \begin{equation}
        \label{equ:reformulate}
        \min_{f, r} \mathcal{R}(f, r) = \Ee{\boldsymbol{z}_k \sim D_k \atop \boldsymbol{z}_u \sim D_u} {\underbrace{\I{y_k \neq h(\boldsymbol{x}_k)}}_{(a)} + \underbrace{\I{y_k = h(\boldsymbol{x}_k)}}_{(b)} \cdot \underbrace{\I{r(\boldsymbol{x}_u) \le r(\boldsymbol{x}_k)}}_{(c)}}
    \end{equation}
\end{restatable}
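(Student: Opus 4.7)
The plan is to show the direct identity $\mathcal{R}(f,r) = 1 - \texttt{OpenAUC}$, from which the equivalence of maximizing OpenAUC and minimizing $\mathcal{R}$ is immediate. This reduces the whole proposition to a one-line algebraic manipulation of indicator functions inside the pairwise expectation, together with linearity of expectation.

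Concretely, I would introduce the shorthand $A := \mathbf{1}[y_k = h(\boldsymbol{x}_k)]$ and $B := \mathbf{1}[r(\boldsymbol{x}_u) > r(\boldsymbol{x}_k)]$, so that by the pairwise formulation of Prop.\ref{prop:pairwise} we have $\texttt{OpenAUC} = \mathbb{E}[A \cdot B]$. The complementary indicators are $1 - A = \mathbf{1}[y_k \neq h(\boldsymbol{x}_k)]$ and $1 - B = \mathbf{1}[r(\boldsymbol{x}_u) \leq r(\boldsymbol{x}_k)]$. The key identity is the telescoping decomposition
\begin{equation*}
1 - A B \;=\; (1 - A) + A\,(1 - B),
\end{equation*}
which holds pointwise regardless of the particular sample pair and regardless of whether $A, B \in \{0,1\}$. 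Substituting back gives $1 - AB = \mathbf{1}[y_k \neq h(\boldsymbol{x}_k)] + \mathbf{1}[y_k = h(\boldsymbol{x}_k)] \cdot \mathbf{1}[r(\boldsymbol{x}_u) \leq r(\boldsymbol{x}_k)]$, which is exactly the integrand of $\mathcal{R}(f,r)$.

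Taking expectation with respect to the product distribution $D_k \times D_u$ on both sides and using linearity yields $1 - \texttt{OpenAUC} = \mathcal{R}(f,r)$. Since OpenAUC and $\mathcal{R}$ differ by the constant $1$, maximizing the former over $(f,r)$ is equivalent to minimizing the latter, which finishes the argument.

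There is essentially no obstacle here; the only thing worth being careful about is making sure that the decomposition $1 - AB = (1-A) + A(1-B)$ is applied before taking the expectation (so that the product $A(1-B)$ appears inside a single expectation, as needed to match the form in (\ref{equ:reformulate})), rather than trying to split the expectation of a product into a product of expectations, which would fail because $A$ and $B$ are dependent through $\boldsymbol{x}_k$.
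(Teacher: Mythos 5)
Your proof is correct and follows essentially the same route as the paper: both rest on the pointwise identity $1 - AB = (1-A) + A(1-B)$ applied to the indicators, then pass to expectation by linearity. The only cosmetic difference is that the paper verifies this identity by a $2\times 2$ truth table whereas you verify it by the one-line algebraic expansion (which, as you note, holds for arbitrary reals, not just Booleans).
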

\begin{remark}
    If we view term (b) as a switch of term (c), then $\mathcal{R}(f, r)$ essentially is a conditional combination of the close-set error and the AUC risk, that is, term (a) and term (c) respectively. In other words, we first minimize the close-set error, and only if $h$ makes a correct prediction, we will optimize the open-set AUC risk. This intuition is consistent with an OSR model's decision process described in Sec.\ref{subsec:notation}.
\end{remark}
On top of this observation, we have the following empirical minimization objective:
\begin{equation}
    \label{eq:empirical_opt}
    \hat{\mathcal{R}}_{L, \ell}(f, r) := \frac{1}{N_k} \sum_{i=1}^{N_k} L(h(\boldsymbol{x}_i), y_i) + \frac{1}{N_k N_u} \sum_{i=i}^{N_k}\sum_{j=1}^{N_u} \left[ \I{y_i = h(\boldsymbol{x}_i)} \cdot \ell(r(\boldsymbol{x}_j) - r(\boldsymbol{x}_i)) \right],
\end{equation}
where $L$ is a common close-set classification loss function such as the cross-entropy loss; $\ell$ represents a continuous surrogate loss for AUC optimization such as the square loss $\ell_{sq}(t) = (1 - t)^2$, whose details can be found in the recent survey \cite{yang2022auc,DBLP:conf/ijcai/GaoZ15,liu2019stochastic}; $\I{y_k = h(\boldsymbol{x}_k)}$ acts as the switch of $\ell$ and will not be updated in each epoch. Note that there exist no open-set samples in the training set. Inspired by \cite{DBLP:conf/cvpr/ZhouYZ21}, we adopt manifold mixup \cite{DBLP:conf/icml/VermaLBNMLB19} to generate open-set samples. Specifically, the convex combinations of samples from different known classes are regarded as open-set samples:
\begin{equation}
    \boldsymbol{x}_u = \lambda_{\beta} f_{pre}(\boldsymbol{x}_i) + (1 - \lambda_{\beta}) f_{pre}(\boldsymbol{x}_j), y_i \neq y_j,
\end{equation}
where $\boldsymbol{x}_i$ and $\boldsymbol{x}_j$ are samples from different known classes; $f$ is decomposed as $f_{post}(f_{pre}(\boldsymbol{x}))$; $\lambda_{\beta} \in (0, 1)$ is sampled from a Beta distribution $B(\alpha, \alpha)$. Then, the score of $\boldsymbol{x}_u$ is obtained by $f_{post}(\boldsymbol{x}_u)$. Compared with other generative models \cite{DBLP:conf/eccv/NealOFWL18,DBLP:conf/cvpr/YueWS0Z21,DBLP:conf/iccv/KongR21}, manifold mixup enjoys a significant efficiency advantage. While a potential problem is that $\boldsymbol{x}_u$ might locate close to the manifolds of other known classes, \textit{i.e.}, $\lambda_{\beta} f_{pre}(\boldsymbol{x}_i) + (1 - \lambda_{\beta}) f_{pre}(\boldsymbol{x}_j) \approx f_{pre}(\boldsymbol{x}_k), k \neq i, j$. To this end, we set an extra hyperparameter $\lambda$ for the AUC risk, and the final objective becomes:
\begin{equation}
    \label{eq:final_objective}
    \hat{\mathcal{R}}_{L, \ell, \lambda}(f, r) := \frac{1}{N_k} \sum_{i=1}^{N_k} L(h(\boldsymbol{x}_i), y_i) + \frac{\lambda}{N_k N_u} \sum_{i=i}^{N_k}\sum_{j=1}^{N_u} \left[ \I{y_i = h(\boldsymbol{x}_i)} \cdot \ell(r(\boldsymbol{x}_j) - r(\boldsymbol{x}_i)) \right].
\end{equation}

\begin{table}[!t]
    \centering
    \renewcommand\arraystretch{1.0}
    \caption{The OpenAUC results on six benchmark datasets for OSR, where C+10, C+50 and CE+ represent \textbf{CIFAR+10}, \textbf{CIFAR+50} and Cross-Entropy+, respectively. The best and the runner-up method on each dataset are marked with \Topone{red} and \Toptwo{blue}, respectively.}
    \begin{tabular}{c|cccccc}
        \toprule
        Method & MNIST & SVHN  & CIFAR10 & C+10 & C+50 & TinyImageNet \\
        \midrule
        Softmax & 99.2$\pm$0.1 & 92.8$\pm$0.4 & 83.8$\pm$1.5 & 90.9$\pm$1.3 & 88.5$\pm$0.7 & 60.8$\pm$5.1 \\
        GCPL \cite{DBLP:conf/cvpr/YangZYL18} & 99.1$\pm$0.2 & 93.4$\pm$0.6 & 84.3$\pm$1.7 & 91.0$\pm$1.7 & 88.3$\pm$1.1 & 59.3$\pm$5.3 \\
        RPL \cite{DBLP:conf/eccv/ChenQ0PLHP020} & 99.4$\pm$0.1 & 93.6$\pm$0.5 & 85.2$\pm$1.4 & 91.8$\pm$1.2 & 89.6$\pm$0.9 & 53.2$\pm$4.6 \\
        ARPL \cite{9521769} & 99.4$\pm$0.1 & 94.0$\pm$0.6 & 86.6$\pm$1.4 & 93.5$\pm$0.8 & 91.6$\pm$0.4 & 62.3$\pm$3.3 \\
        ARPL+CS \cite{9521769} & \Topone{99.5$\pm$0.1} & \Toptwo{94.3$\pm$0.3} & 87.9$\pm$1.5 & \Toptwo{94.7$\pm$0.7} & \Toptwo{92.9$\pm$0.3} & 65.9$\pm$3.8 \\
        CE+ \cite{DBLP:conf/iclr/Vaze0VZ22} & 99.1$\pm$0.2 & 93.9$\pm$0.4 & \Toptwo{88.1$\pm$1.7} & 93.2$\pm$0.6 & 90.2$\pm$0.4 & \Toptwo{74.3$\pm$3.9} \\
        \midrule
        Acc+AUC & 99.3$\pm$0.2 & 94.0$\pm$0.9 & 87.6$\pm$1.9 & 93.6$\pm$1.0 & 92.0$\pm$0.5 & 74.0$\pm$4.0 \\
        Ours  & \Toptwo{99.4$\pm$0.1} & \Topone{95.0$\pm$0.4} & \Topone{89.2$\pm$1.9} & \Topone{95.2$\pm$0.7} & \Topone{93.6$\pm$0.3} & \Topone{75.9$\pm$4.1} \\
        \bottomrule
    \end{tabular}%
    \label{tab:results}%
\end{table}%

\begin{figure}[!t]
    \centering
    \subfigure[CIFAR+10]{
      \includegraphics[width=0.31\columnwidth]{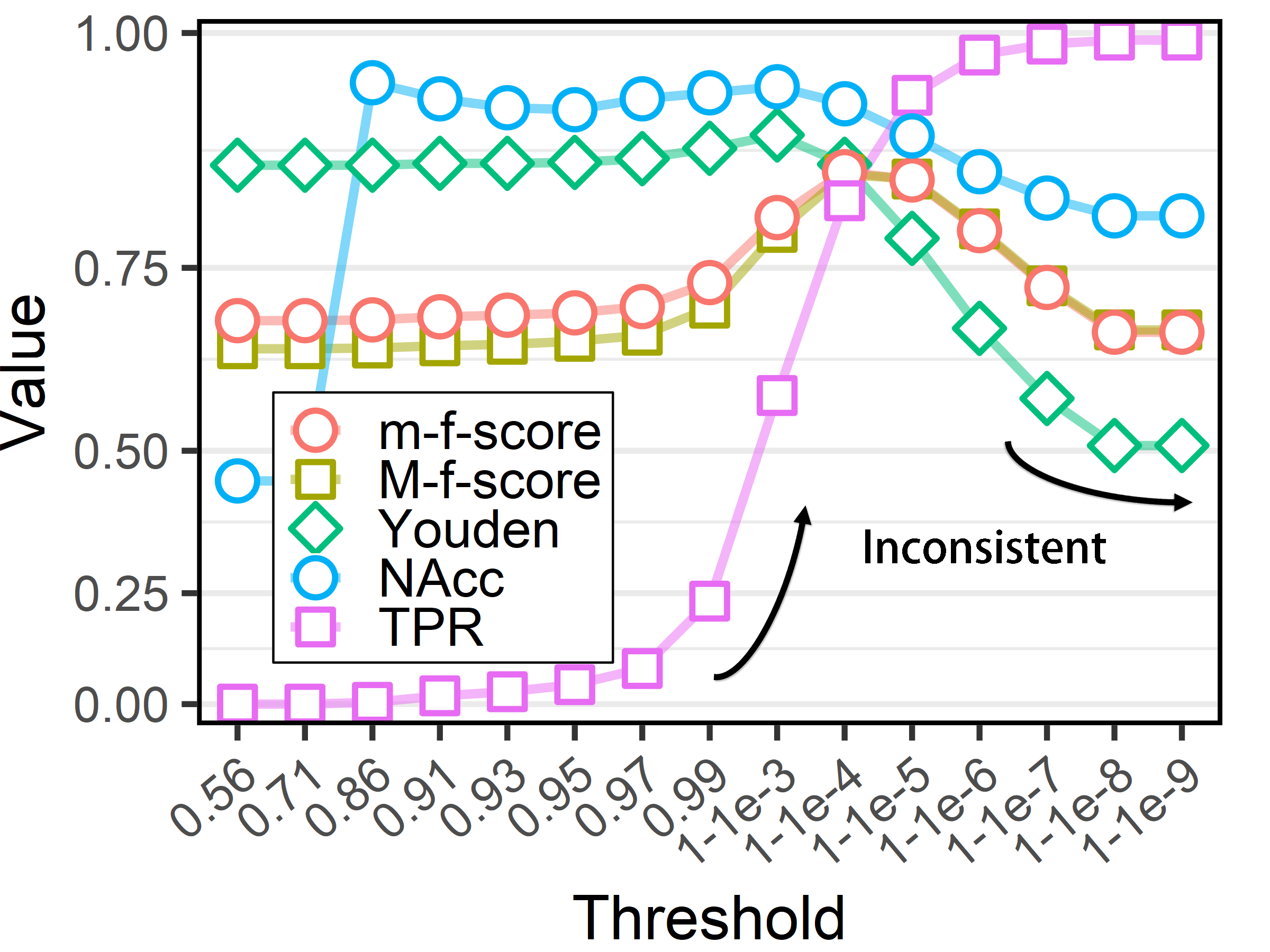}
     }
    \subfigure[CIFAR+50]{
      \includegraphics[width=0.31\columnwidth]{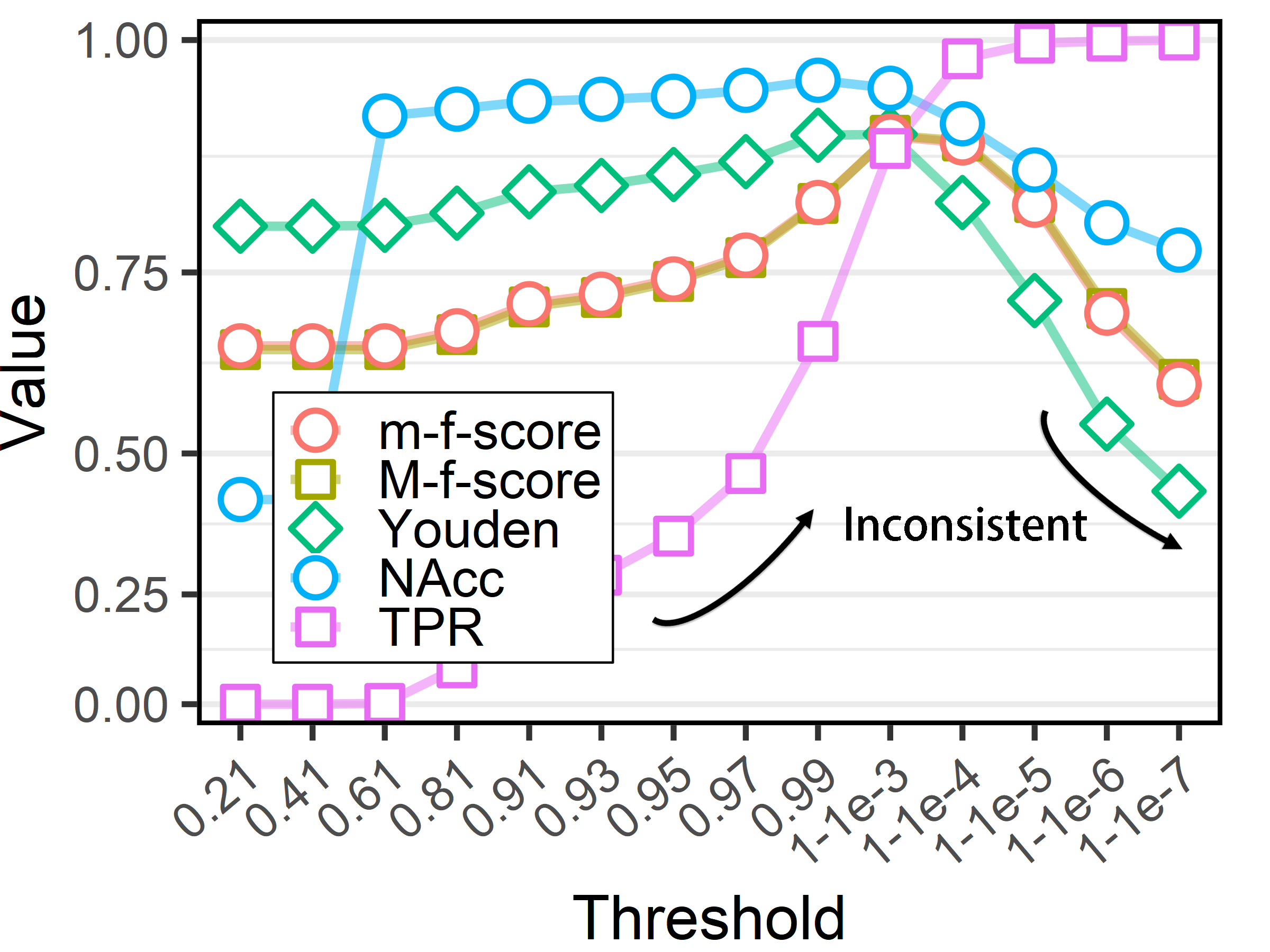}
     }
    \subfigure[TinyImageNet]{
      \includegraphics[width=0.31\columnwidth]{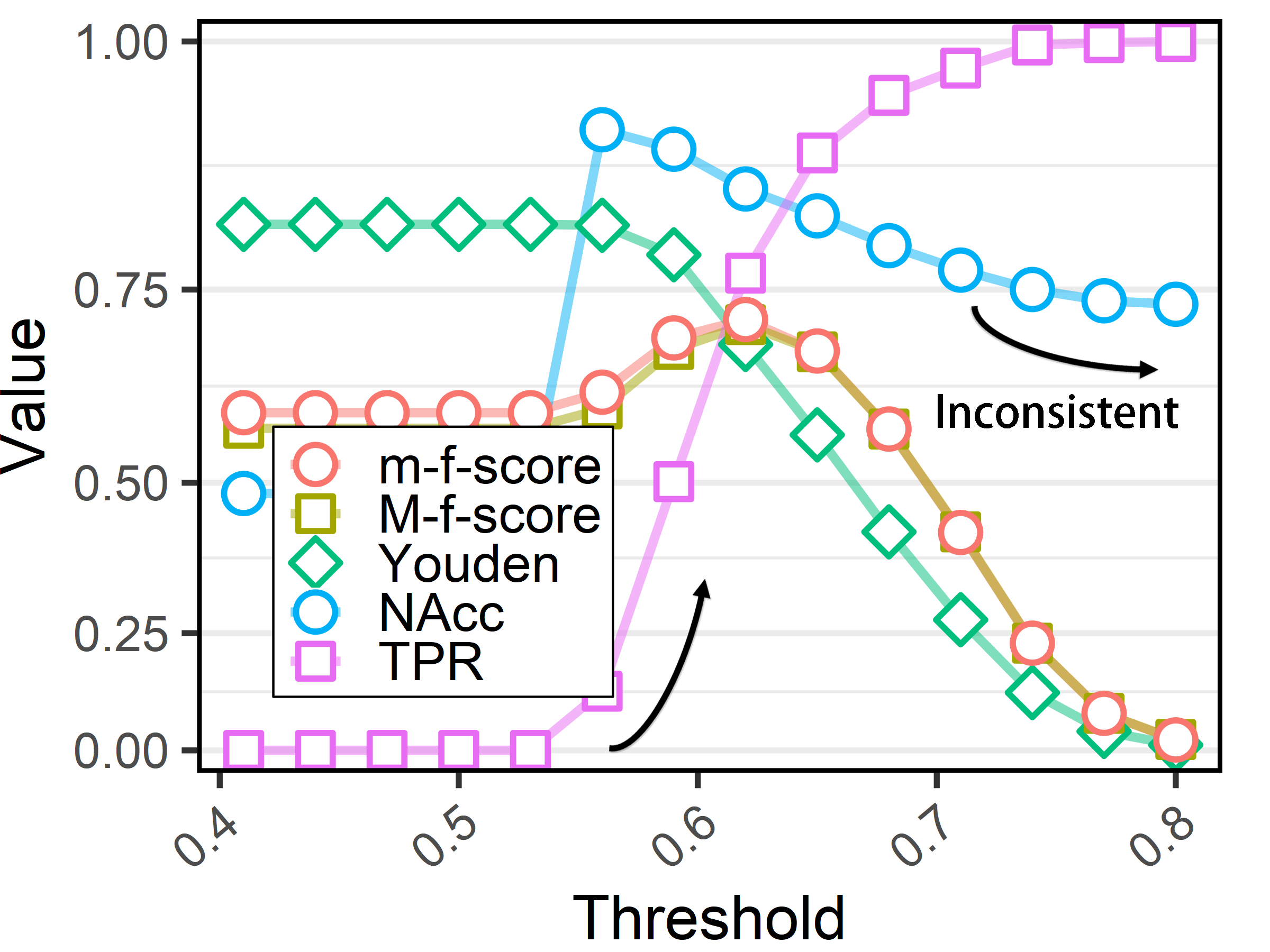}
     }
     \caption{The inconsistency property of classification-based metrics. We can find that all these metrics decrease rapidly as the TPR performance of unknown classes increases.}
    \label{fig:escapeI}
\end{figure}

\section{Experiments}
\label{sec:experiment}
In this section, extensive experiments are conducted to answer the following research questions: \textbf{(Q1)} Does the inconsistency property exist in empirical results? \textbf{(Q2)} Does optimize OpenAUC avoid the inconsistency properties and thus help the model outperform the state-of-the-art methods? \textbf{(Q3)} How does the proposed method perform under different hyperparameter settings?

\subsection{Protocol}
Following the protocol in \cite{9521769} and \cite{DBLP:conf/iclr/Vaze0VZ22}, the experiments are conducted on the following datasets: (1) \textbf{MNIST}\footnotemark[1] \cite{DBLP:journals/pieee/LeCunBBH98},  \textbf{SVHN}\footnotemark[2] \cite{netzer2011reading} and \textbf{CIFAR10}\footnotemark[3] \cite{krizhevsky2009learning}, where 4 classes are randomly sampled as the unknown classes; (2) \textbf{CIFAR+10} and \textbf{CIFAR+50}, where 4 classes are sampled from CIFAR10 as known classes, and $N$ non-overlapping classes sampled from the CIFAR100 dataset\footnotemark[3] \cite{krizhevsky2009learning} act as the unknown classes; (3) \textbf{TinyImageNet}\footnotemark[4] \cite{DBLP:journals/ijcv/RussakovskyDSKS15}, where 20 and 180 classes are randomly sampled to  evaluate the close-set and open-set performance; (4) \textbf{Fine-grained datasets} such as CUB\footnotemark[5] \cite{WahCUB_200_2011}, where known classes are subdivided into three disjoint sets \{\textit{Easy}, \textit{Medium}, \textit{Hard}\} according to their semantic novelty.
\footnotetext[1]{\url{http://yann.lecun.com/exdb/mnist/}. Licensed GPL3.}
\footnotetext[2]{\url{http://ufldl.stanford.edu/housenumbers/}. Licensed GPL3.}
\footnotetext[3]{\url{https://www.cs.toronto.edu/~kriz/cifar.html}. Licensed MIT.}
\footnotetext[4]{\url{http://cs231n.stanford.edu/tiny-imagenet-200.zip}. Licensed MIT.}
\footnotetext[5]{\url{https://www.vision.caltech.edu/datasets/cub_200_2011/}. Licensed MIT.}

To answer the question \textbf{(Q1)}, we record the model perform of Cross-Entropy+ \cite{DBLP:conf/iclr/Vaze0VZ22} on open-set F-score, Youden' index, Normalized Accuracy, AUC, close-set accuracy and TPR of the open-set class. To answer the question \textbf{(Q2)}, we compare the proposed method with the state-of-the-art methods, including Softmax, GCPL \cite{DBLP:conf/cvpr/YangZYL18}, RPL \cite{DBLP:conf/eccv/ChenQ0PLHP020}, ARPL \cite{9521769}, ARPL+CS \cite{9521769}, Cross-Entropy+ \cite{DBLP:conf/iclr/Vaze0VZ22}. To validate the inconsistency property III, we compare the objective that removes the term $ \I{y_k = h(\boldsymbol{x}_k)} $ in Eq.(\ref{eq:final_objective}), which is denoted by Acc+AUC. Besides, the implementation details are presented in Appendix.\ref{app:implementation}

\begin{figure}[!t]
    \centering
    \subfigure[SVHN]{
      \includegraphics[width=0.31\columnwidth]{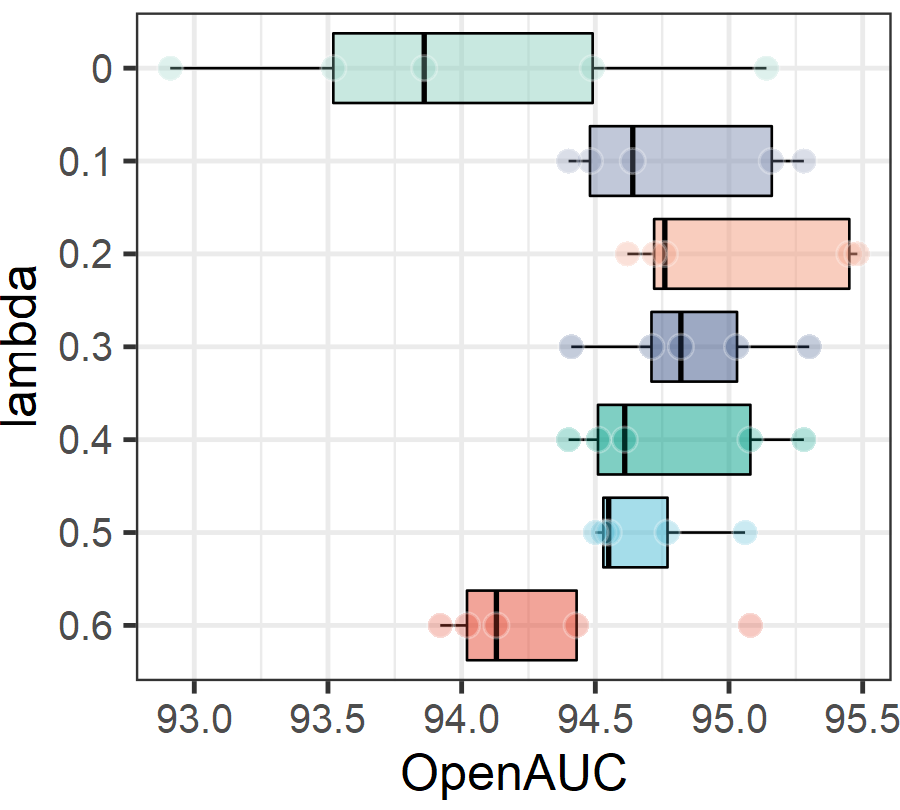}
     }
    \subfigure[CIFAR10]{
      \includegraphics[width=0.31\columnwidth]{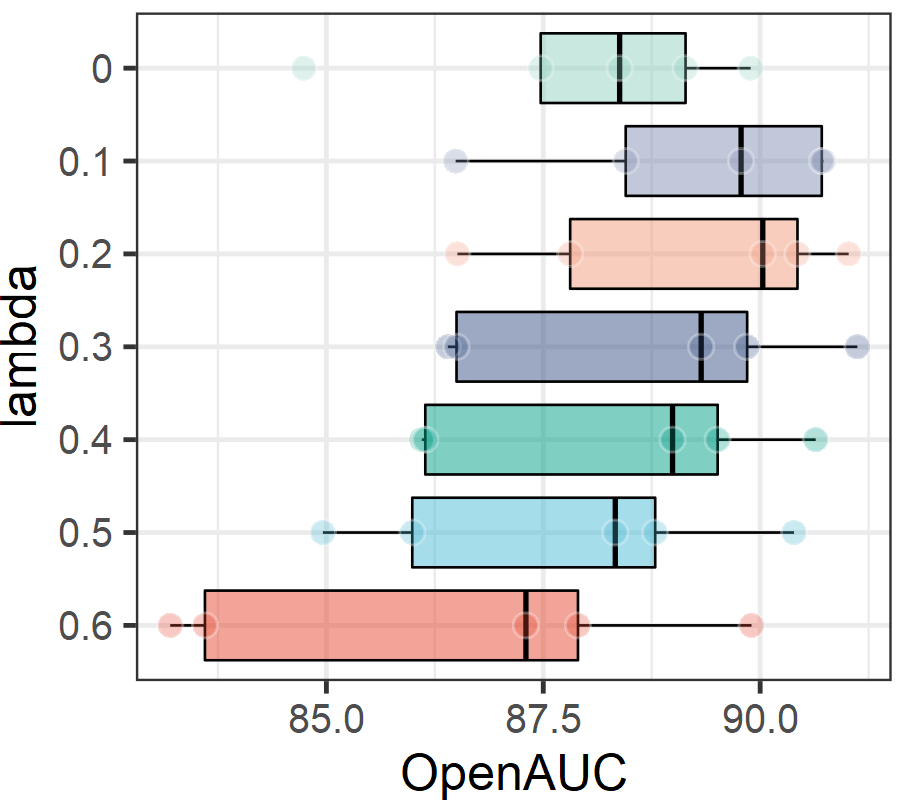}
     }
    \subfigure[CIFAR+50]{
      \includegraphics[width=0.31\columnwidth]{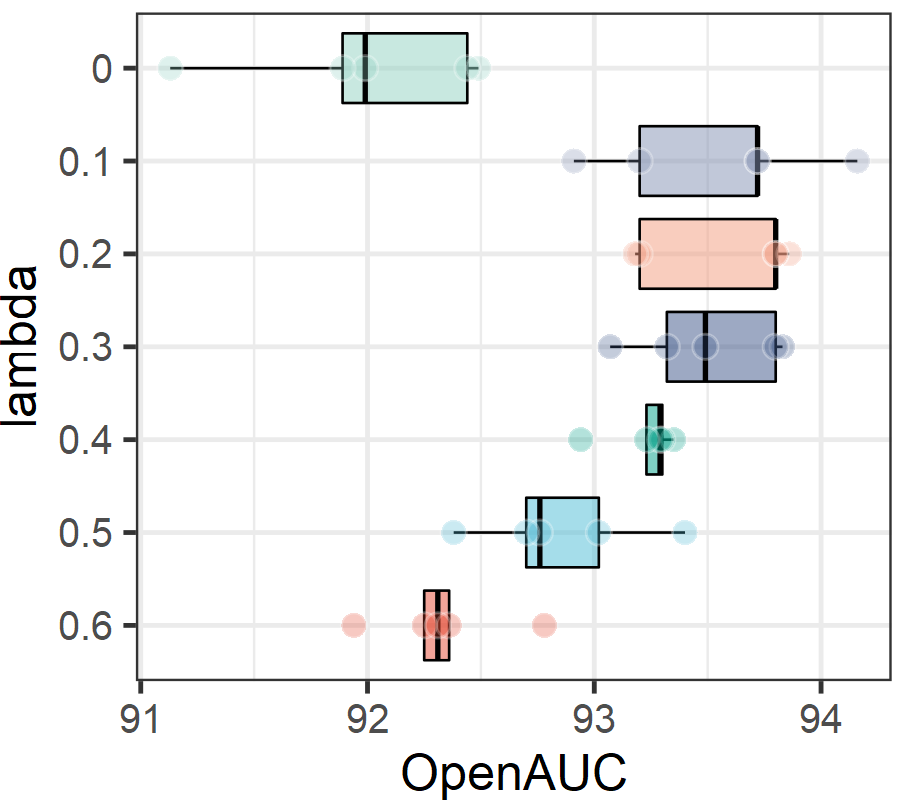}
     }
     \caption{The sensitivity analysis of the proposed learning method for OpenAUC.}
    \label{fig:sensitivity}
\end{figure}

\subsection{Results and Analysis}
\textbf{Metric comparison (Q1).} We present the model performance on classification-based metrics under different thresholds in Fig.\ref{fig:escapeI}. From these results, we can find that all the classification-based metrics decrease rapidly as the TPR performance of the unknown class increases. Moreover, on CIFAR+10 and TinyImageNet, the best NAcc performance corresponds to a very small TPR performance of the unknown class, which is consistent with our analysis in Prop.\ref{prop:escapeII}. In other words, all these metrics encourage us selecting a threshold that misclassifies more the open-set samples. All these results show that the inconsistency properties do exist in practical scenarios.

\textbf{Performance Comparison (Q2).} We compare the proposed method with competitors in Tab.\ref{tab:results} and have the following observations: (1) Our method outperforms the competitors consistently on all the datasets, which validate the effectiveness of the proposed learning method. (2) Acc+AUC performs inferior to the proposed method, which is consistent with the inconsistency property III. Besides, the results in fine-grained datasets are presented in Appendix.\ref{app:empirical}.

\textbf{Sensitivity Analysis (Q3).} We present the sensitivity in terms of the hyperparameter $\lambda$ in Fig.\ref{fig:sensitivity}. From the results, we have the following observation: (1) Optimizing the OpenAUC risk consistently helps improve the model performance, which is consistent with our goal to design the OpenAUC objective. (2) When $\lambda$ equals 0.1 or 0.2, the proposed method achieves the best performance. (3) As $\lambda$ increases, there exists a decreasing trend in model performance, which might be induced by the noise in generated samples.

\section{Broad Impact}
This work provides a novel metric named OpenAUC for the OSR task, as well as its optimization method. We expect our research could promote the research of open-set learning, especially from the perspective of model evaluation. Moreover, no metric is perfect, and, of course, it is no exception for OpenAUC. To be specific, OpenAUC summarizes the COTPR performance \textit{under all the OFPR performance}. However, some applications, such as self-driving, require a high recall of open-set. According to Prop.\ref{prop:FPRbound}, only the performance under low OFPR is of interest. In this case, OpenAUC might be biased due to considering irrelevant performances. This might cause potential negative impact concerning safety and security. To fix this issue, optimizing the partial OpenAUC, which summarizes the COTPR performance under some given OFPR performance, might be a better choice. Of course, there is no free lunch. Partial OpenAUC will be more difficult to optimize due to the selection operation. Besides, the generalization bound of open-set learning is still an opening problem, and we leave the corresponding analysis of OpenAUC optimization in future work.

\section{Conclusion}
\label{sec:conclusion}
This paper presents an extensive analysis of existing metrics for OSR. The theoretical results show that existing metrics are essentially inconsistent with the goal of OSR. To fix this issue, a novel metric named OpenAUC is proposed. Compared with existing metrics, OpenAUC enjoys a concise formulation that evaluates the close-set performance and open-set performance in a coupling manner. A series of propositions show that OpenAUC is free from the inconsistency properties of existing metrics. Finally, an end-to-end learning method is proposed for OpenAUC, and the experimental results validate the theoretical results and the effectiveness of the proposed method. 

\section*{Acknowledgments}
This work was supported in part by the National Key R\&D Program of China under Grant 2018AAA0102000, in part by National Natural Science Foundation of China: U21B2038, 61931008, U2001202, 61733007, 6212200758 and 61976202, in part by the Fundamental Research Funds for the Central Universities, in part by Youth Innovation Promotion Association CAS, in part by the Strategic Priority Research Program of Chinese Academy of Sciences, Grant No. XDB28000000, in part by the China National Postdoctoral Program for Innovative Talents under Grant BX2021298, and in part by China Postdoctoral Science Foundation under Grant 2022M713101.

\bibliographystyle{unsrt}
\bibliography{citations}

\clearpage
\appendix
\section*{\Large{Appendix}}

\section{Related Work of AUC Optimization}
OpenAUC is naturally related to AUC \cite{DBLP:conf/ijcai/LingHZ03} due to the pairwise formulation Eq.(\ref{eq:pairwise}) and the surrogate loss used in Eq.(\ref{eq:empirical_opt}). Specifically, for a binary classification problem, AUC, the Area Under the ROC Curve, measures the probability that the positive instances are ranked higher than the negative ones. Benefiting from this property, AUC is essentially insensitive to label distribution and thus has become a popular metric for imbalanced scenarios such as disease prediction \cite{DBLP:conf/isbi/ZhouGCGFTYZ020} and novelty detection \cite{DBLP:journals/sigpro/PimentelCCT14}. As pointed out in \cite{DBLP:conf/ijcai/LingHZ03}, optimizing the AUC performance cannot be realized by the traditional learning paradigm that minimizes the error rate. To this end, how to optimize the AUC performance has raised wide attention. In this direction, most early work focuses on the off-line setting \cite{DBLP:conf/icml/HerschtalR04,DBLP:conf/pkdd/CaldersJ07,DBLP:journals/jmlr/ZhangSV12}. And \cite{DBLP:conf/ijcai/GaoZ15} provides a systematic analysis of the consistency property of common surrogate losses. Nowadays, more studies explore the online setting due to the rapid increase of the dataset scale \cite{DBLP:conf/icml/ZhaoHJY11,DBLP:conf/nips/YingWL16,DBLP:conf/icml/NatoleYL18,DBLP:journals/fams/NatoleYL19,yang2020stochastic}, whose details can be found in the recent survey \cite{yang2022auc}. 

\section{Proof for the Inconsistency Property}
\subsection{Proof for Proposition \ref{prop:escapeI}}
\label{app:escapeI}
\escapeI*
\begin{proof}
    We first consider a simpler case where only two predictions differ between $(h, R)$ and $(\tilde{h}, \tilde{R})$. As shown in Fig.\ref{fig:escape}(a), given an open-set sample $(\boldsymbol{x}_1, C + 1)$ and a close-set sample $(\boldsymbol{x}_2, y_2)$, where $y_2 \neq C + 1$, if $(\tilde{h}, \tilde{R})$ makes the same predictions as $(h, R)$ expect that $R(\boldsymbol{x}_1) = 1, \tilde{R}(\boldsymbol{x}_1) = 0, \tilde{h}(\boldsymbol{x}_1) = h(\boldsymbol{x}_2)$ and $R(\boldsymbol{x}_2) = 0, h(\boldsymbol{x}_2) \neq y_2, \tilde{R}(\boldsymbol{x}_2) = 1$, it is not difficult to check that
    \begin{itemize}
        \item $\widetilde{F N}_{y_2} = F N_{y_2}$ since both $(h, R)$ and $(\tilde{h}, \tilde{R})$ fail to classify $\boldsymbol{x}_2$;
        \item $\widetilde{F P}_{h(\boldsymbol{x}_2)} = F P_{h(\boldsymbol{x}_2)}$ since $\boldsymbol{x}_1$ is misclassified as $h(\boldsymbol{x}_2)$, and $(\tilde{h}, \tilde{R})$ changes the prediction of $\boldsymbol{x}_2$ to $C+1$ (increase and decrease $\widetilde{F P}_{h(\boldsymbol{x}_2)}$ by 1, respectively).
    \end{itemize}
    Under such a construction, we can find that $\texttt{M}(\tilde{h}, \tilde{R}) = \texttt{M}(h, R)$ but $\widetilde{T P}_{C+1} = T P_{C + 1} - 1$. As long as $$\sum_{i=1}^C \texttt{FP}_i(h, R) \ge \texttt{TP}_{C+1}(h, R),$$ we can utilize this construction iteratively until $\texttt{TP}_{C+1}(\tilde{h}, \tilde{R}) = 0$, that is, $\tilde{R}(\boldsymbol{x}) \equiv 0$ for any $y = C + 1$.
\end{proof}

\subsection{Proof for Proposition \ref{prop:escapeII}}
\label{app:escapeII}
\escapeII*
\begin{proof}
    Similarly, we first consider a simpler case where only two predictions differ between $(h, R)$ and $(\tilde{h}, \tilde{R})$. As shown in Fig.\ref{fig:escape}(b), given an open-set sample $(\boldsymbol{x}_1, C + 1)$ and a close-set sample $(\boldsymbol{x}_2, y_2)$, where $y_2 \neq C + 1$, if $(\tilde{h}, \tilde{R})$ makes the same predictions as $(h, R)$ expect that $R(\boldsymbol{x}_1) = 1, \tilde{R}(\boldsymbol{x}_1) = 0, \tilde{h}(\boldsymbol{x}_1) = y_2$ and $R(\boldsymbol{x}_2) = 1, \tilde{R}(\boldsymbol{x}_2) = 0, \tilde{h}(\boldsymbol{x}_2) = y_2$, we can find that
    \begin{itemize}
        \item $\widetilde{T N}_{y_2} = T N_{y_2} - 1$, $\widetilde{F P}_{y_2} = F P_{y_2} + 1$ and $\widetilde{T P}_{C+1} = T P_{C + 1} - 1$ since $(\tilde{h}, \tilde{R})$ changes the prediction of $\boldsymbol{x}_1$ to $y_2$.
        \item $\widetilde{T P}_{y_2} = T P_{y_2} + 1$, $\widetilde{F N}_{y_2} = F N_{y_2} - 1$ and $\widetilde{F P}_{C+1} = F P_{C + 1} - 1$ since $(\tilde{h}, \tilde{R})$ make a correct prediction on $\boldsymbol{x}_2$.
    \end{itemize}
    On one hand, we have $\texttt{AKS}(\tilde{h}, \tilde{R}) = \texttt{AKS}(h, R)$ since $\widetilde{T P}_{y_2} + \widetilde{T N}_{y_2} = T P_{y_2} + T N_{y_2}$ and $\widetilde{T P}_{y_2} + \widetilde{T N}_{y_2} + \widetilde{F P}_{y_2} + \widetilde{F N}_{y_2} = T P_{y_2} + T N_{y_2} + F P_{y_2} + F N_{y_2}$. On the other hand, 
    \begin{equation}
        \begin{split}
            \texttt{AUS}(\tilde{h}, \tilde{R}) - \texttt{AUS}(h, R) & = \frac{\texttt{TP}_{C+1} - 1}{\texttt{TP}_{C+1} + \texttt{FP}_{C+1} - 2} - \frac{\texttt{TP}_{C+1}}{\texttt{TP}_{C+1} + \texttt{FP}_{C+1}} \\
            & = \frac{\texttt{TP}_{C+1} - \texttt{FP}_{C+1}}{(\texttt{TP}_{C+1} + \texttt{FP}_{C+1} - 2)(\texttt{TP}_{C+1} + \texttt{FP}_{C+1})} \\
            & > 0.
        \end{split}
    \end{equation}
    Consequently, we have $\texttt{NAcc}(\tilde{h}, \tilde{R}) > \texttt{NAcc}(h, R)$ but $\texttt{TP}_{C+1}(\tilde{h}, \tilde{R}) = \texttt{TP}_{C+1}(h, R) - 1$. As long as $$\sum_{i=1}^C \texttt{FN}_i(h, R) \ge \texttt{TP}_{C+1}(h, R),$$ we can utilize this construction iteratively until $\texttt{TP}_{C+1}(\tilde{h}, \tilde{R}) = 0$, that is, $\tilde{R}(\boldsymbol{x}) \equiv 0$ for any $y = C + 1$.
\end{proof}

\begin{figure}
    \center
    \includegraphics[width=\linewidth]{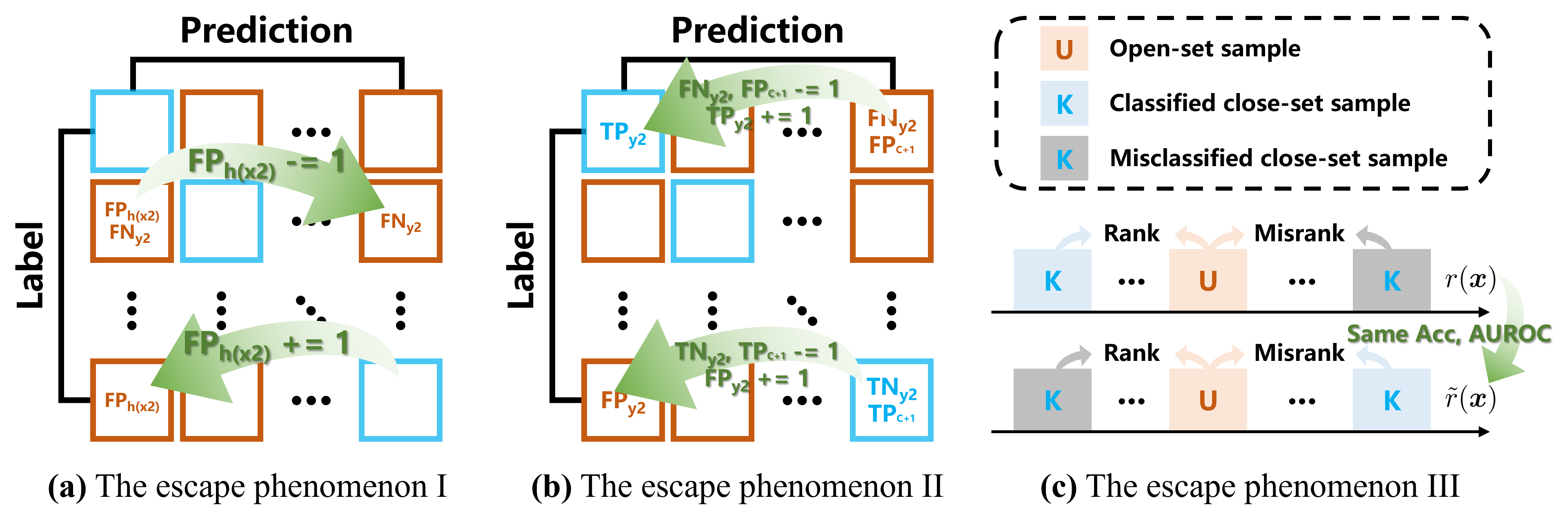}
    \caption{The illustration of three inconsistency phenomena: \textbf{(a)} The metric invariant to $\texttt{TP}_{C+1}$, $\texttt{FN}_{C+1}$ and $\texttt{FP}_{C+1} $ suffers from the inconsistency property I, such as $\texttt{F-score} $ and Youden's index; \textbf{(b)} $\texttt{NA}$ suffers from the inconsistency property II; \textbf{(c)} the metric simply aggregating $\texttt{Acc}_k$ and $\texttt{AUC}$ suffers from the inconsistency property III.}
    \label{fig:escape}
\end{figure}

\subsection{Proof for Proposition \ref{prop:escapeIII}}
\label{app:escapeIII}
\escapeIII*
\begin{proof}
    Similarly, we first consider a simpler case where only two predictions differ between $(h, r)$ and $(\tilde{h}, \tilde{r})$. As shown in Fig.\ref{fig:escape}(c), given two close-set samples $(\boldsymbol{x}_1, y_1), (\boldsymbol{x}_2, y_2)$ and an open-set sample $(\boldsymbol{x}_3, C + 1)$, if $(\tilde{h}, \tilde{r})$ makes the same predictions as $(h, r)$ expect that $h(\boldsymbol{x}_1) = \tilde{h}(\boldsymbol{x}_1) = y_1, h(\boldsymbol{x}_2), \tilde{h}(\boldsymbol{x}_2) \neq y_2, r(\boldsymbol{x}_2) > r(\boldsymbol{x}_3) > r(\boldsymbol{x}_1), r(\boldsymbol{x}_3) = \tilde{r}(\boldsymbol{x}_3)$ and $\tilde{r}(\boldsymbol{x}_2) = r(\boldsymbol{x}_1), \tilde{r}(\boldsymbol{x}_1) = r(\boldsymbol{x}_2)$, we can find that 
    \begin{itemize}
        \item $\texttt{Acc}_k(\tilde{h}) = \texttt{Acc}_k(h)$ since $(\tilde{f}, \tilde{r})$ does not change the predictions on close-set;
        \item $\texttt{AUC}_k(\tilde{r}) = \texttt{AUC}_k(r)$ since the ordering between close-set samples and open-set samples is also not changed.
    \end{itemize}
    Consequently, we have $agg(\texttt{Acc}_k(\tilde{h}), \texttt{AUC}(\tilde{r})) = agg(\texttt{Acc}_k(h), \texttt{AUC}(r))$. However, $(\tilde{h}, \tilde{r})$ performs inferior to $(h, r)$ on the OSR task. To be specific, according to the prediction process described in Sec.\ref{subsec:notation}, if we select $t = r(\boldsymbol{x}_3)$, then $(h, r)$ will make correct predictions on $\boldsymbol{x}_1$ and $\boldsymbol{x}_3$. As a comparison, $(\tilde{h}, \tilde{r})$ only correctly classifies $\boldsymbol{x}_3$. In this case, the simple aggregation of $\texttt{Acc}_k$ and $\texttt{AUC}_k$ is clearly inconsistent with the model performance.
\end{proof}

\section{Proof for OpenAUC}
\subsection{Proof for Proposition \ref{prop:pairwise}}
\label{app:pairwise}
\pairwise*
Let $X_1, X_0$ be continuous random variables for a close-set/open-set sample given by $r$. Let $Y \in \{0, 1\}$ be the random variable where $Y=1$ means that the classifier $h$ makes a correct prediction for a close-set sample. Then, Let $f_1$ and $f_0$ be the density of $X_1 | Y = 1$ and $X_2$ respectively and $F_1$ and $F_0$ be the cumulative distribution function of $X_1$ and $Y = 1, X_2$ respectively. Given a classifier $h$, an open-set score function $r$ and a threshold $t$, we have 
\begin{equation}
    \begin{split}
        \texttt{COTPR}(t) & = \pp{X_1 \le t, Y = 1} = \pp{X_1 \le t | Y = 1} \pp{Y = 1} = F_1(t) \pp{Y = 1}, \\
        \texttt{OFPR}(t) & = \pp{X_0 \le t } = F_0(t).
    \end{split}
\end{equation}
Then let $t = \texttt{OFPR}^{-1}(t)$, that is, $t = \texttt{OFPR}(t)$, we have 
\begin{equation}
    \begin{split}
        \texttt{OpenAUC} = & \int_{-\infty}^{+\infty} \texttt{COTPR}( t ) \texttt{OFPR}'(t) \,dt  \\
        = & \int_{-\infty}^{+\infty} F_1(t) \pp{Y = 1} f_0(t) \,dt \\
        = & \pp{Y = 1} \int_{-\infty}^{+\infty} F_1(t) f_0(t) \,dt \\
        = & \pp{Y = 1} \pp{X_0 > X_1 | Y = 1} \\
        = & \pp{X_0 > X_1, Y = 1}.
    \end{split}
\end{equation}
Thus, we have
\begin{equation}
    \texttt{OpenAUC}(f, r) = \E{\boldsymbol{z}_k \sim D_k \atop \boldsymbol{z}_u \sim D_u}{\I{y_k = h(\boldsymbol{x}_k)}} \cdot \I{r(\boldsymbol{x}_u) > r(\boldsymbol{x}_k)}.
\end{equation}

\subsection{Proof for Proposition \ref{prop:openauc_no_escapeI}}
\label{app:openauc_no_escapeI}
\openaucnoescapeI*
\begin{proof}
    Since $R(\boldsymbol{x}_1) = 1, \tilde{R}(\boldsymbol{x}_1) = 0$, we have $r(\boldsymbol{x}_1) > \tilde{r}(\boldsymbol{x}_1)$.
    According to the definition of OpenAUC, 
    \begin{equation}
        \begin{split}
            \texttt{OpenAUC}(h, r) - & \texttt{OpenAUC}(\tilde{h}, \tilde{r}) \\
            & = \frac{1}{N_k N_u} \sum_{i = 1}^{N_k} \I{y_i = h(\boldsymbol{x}_i)} \left( \I{r(\boldsymbol{x}_1) > r(\boldsymbol{x}_i)} - \I{\tilde{r}(\boldsymbol{x}_1) > r(\boldsymbol{x}_i)} \right) \\
            & = \frac{1}{N_k N_u} \sum_{i = 1}^{N_k} \I{y_i = h(\boldsymbol{x}_i)} \cdot \I{r(\boldsymbol{x}_1) > r(\boldsymbol{x}_i) > \tilde{r}(\boldsymbol{x}_1)} \\
            & \ge 0.
        \end{split}
    \end{equation}
    Note that the equality holds only if there exists no correctly-classified close-set sample between $r(\boldsymbol{x}_1)$ and $\tilde{r}(\boldsymbol{x}_1)$. On one hand, this condition is not mild. On the other hand, in this case, $(\tilde{h}, \tilde{r})$ essentially shares the same OSR performance with $(h, r)$ since ranking open-set samples lower than misclassified closed-set samples does not affect model performance on the OSR task. In a word, we can conclude that $\texttt{OpenAUC}(\tilde{h}, \tilde{r}) < \texttt{OpenAUC}(h, r)$ as long as the inconsistency property II happens.
\end{proof}

\subsection{Proof for Proposition \ref{prop:FPRbound}}
\label{app:FPRbound}
\FPRbound*
\begin{proof}
    Let $N_k$ and $N_u$ denote the number of close-set samples and open-set samples. According to the definition of OpenAUC, we have
    \begin{equation}
        1 - \texttt{OpenAUC} \ge 1 - \texttt{AUC} \ge \frac{\texttt{FP}_{C+1} \cdot \texttt{FN}_{C+1}}{N_u N_k}, 
    \end{equation}
    where the second inequlity holds the number of mis-ranked pair is greater than $\texttt{FP}_{C+1} \cdot \texttt{FN}_{C+1}$. Then, since
    \begin{equation}
        \texttt{FPR}_{C+1} = \frac{\texttt{FP}_{C+1}}{N_k}, \texttt{TPR}_{C+1} = \frac{\texttt{TP}_{C+1}}{N_u} = \frac{N_u - \texttt{FN}_{C+1}}{N_u},
    \end{equation}
    we have
    \begin{equation}
        1 - k \ge \texttt{FPR}_{C+1} \cdot (1 - \texttt{TPR}_{C+1}).
    \end{equation}
    Finally, 
    \begin{equation}
        \texttt{TPR}_{C+1} \ge 1 - (1 - k) / a.
    \end{equation}
\end{proof}

\subsection{Proof for Proposition \ref{prop:openauc_no_escapeII}}
\label{app:openauc_no_escapeII}
\openaucnoescapeII*
\begin{proof}
    As shown in Fig.\ref{fig:escape}(c), we have
    \begin{equation}
        \begin{split}
            \texttt{OpenAUC}(h, r) & - \texttt{OpenAUC}(\tilde{h}, \tilde{r}) \\
            & = \frac{1}{N_k N_u} \sum_{j=1}^{N_u} \I{r(\boldsymbol{x}_j) > r(\boldsymbol{x}_1)} - \I{r(\boldsymbol{x}_j) > \tilde{r}(\boldsymbol{x}_1)} \\
            & = \frac{1}{N_k N_u} \sum_{j=1}^{N_u} \I{r(\boldsymbol{x}_j) > r(\boldsymbol{x}_1)} - \I{r(\boldsymbol{x}_j) > r(\boldsymbol{x}_2)} \\
            & = \frac{1}{N_k N_u} |\{ (\boldsymbol{x}, C+1): r(\boldsymbol{x}_2) > r(\boldsymbol{x}) > r(\boldsymbol{x}_1) \}| \\
            &  \ge 1.
        \end{split}
    \end{equation}
    Then, we can conclude that $\texttt{OpenAUC}(\tilde{h}, \tilde{r}) < \texttt{OpenAUC}(h, r)$.
\end{proof}

\begin{table}[!t]
    \centering
    \renewcommand\arraystretch{1.15}
    \caption{The truth table for the proof of Proposition \ref{prop:reformulate}.}
    \begin{tabular}{cc|cc}
        \toprule
        $I_k$ & $I_u$ & $1 - I_k \cdot I_u$  & $\lnot I_k + I_k \cdot \lnot I_u$ \\
        \midrule
        1 & 1 & 0 & 0 \\
        1 & 0 & 1 & 1 \\
        0 & 1 & 1 & 1 \\
        0 & 0 & 1 & 1 \\
        \bottomrule
    \end{tabular}%
    \label{tab:truth_table}%
\end{table}%

\subsection{Proof for Proposition \ref{prop:reformulate}}
\label{app:reformulate}
\reformulate*
\begin{proof}
    Given any pair of close-open sample pair $(\boldsymbol{x}_k, \boldsymbol{x}_u)$, let $I_k$ and $I_u$ indicate whether the events $y_k = h(\boldsymbol{x}_k)$ and $r(\boldsymbol{x}_u) > r(\boldsymbol{x}_k)$ happen, respectively. Then, according to the definition of the OpenAUC risk, we have 
    \begin{equation}
        \mathcal{R}(f, r) = \E{\boldsymbol{z}_k \sim D_k \atop \boldsymbol{z}_u \sim D_u}{1 - I_k \cdot I_u}.
    \end{equation}
    Meanwhile, the right-hand side of Eq.(\ref{equ:reformulate}) can be denoted as 
    \begin{equation}
        \E{\boldsymbol{z}_k \sim D_k \atop \boldsymbol{z}_u \sim D_u}{\lnot I_k + I_k \cdot \lnot I_u}.
    \end{equation}
    Then the proof completes by Tab.\ref{tab:truth_table}.
\end{proof}

\section{Implementation details}
\label{app:implementation}
\textbf{Infrastructure.} All the experiments are carried out on an ubuntu server equipped with Intel(R) Xeon(R) Silver 4110 CPU and an Nvidia(R) TITAN RTX GPU. We implement the codes via \texttt{python} (v-3.8.11), and the main third-party packages include \texttt{pytorch} (v-1.9.0) \cite{DBLP:conf/nips/PaszkeGMLBCKLGA19}, \texttt{numpy} (v-1.20.3) \cite{DBLP:journals/nature/HarrisMWGVCWTBS20}, \texttt{scikit-learn} (v-0.24.2) \cite{DBLP:journals/jmlr/PedregosaVGMTGBPWDVPCBPD11} and \texttt{torchvision} (v-0.10.1) \cite{DBLP:conf/mm/MarcelR10}.

\textbf{Backbone and Optimization Method.} We adopt the widely-used VGG32 model as the backbone \cite{DBLP:conf/cvpr/BendaleB16,DBLP:conf/bmvc/GeDG17,DBLP:conf/cvpr/OzaP19,9521769,DBLP:conf/iclr/Vaze0VZ22}, expect that ResNet50 \cite{DBLP:conf/cvpr/HeZRS16} are utilized in CUB. Besides, the score function follows Def.\ref{def:score_function}. According to the empirical results in \cite{DBLP:conf/iclr/Vaze0VZ22}, we train the model with a batch size of 128 for 600 epochs except TinyImageNet and CUB, for which we use 64 and 32, respectively. Meanwhile, we adopt an initial learning rate of 0.1 for all datasets except TinyImageNet and CUB, for which we use 0.01 and 0.001, respectively. We train with a cosine annealed learning rate, restarting the learning rate to the initial value at epochs 200 and 400. Besides, the RandAugment and label smoothing strategy provided by \cite{DBLP:conf/iclr/Vaze0VZ22} is utilized for all experiments.

\textbf{Generation of Open-set Samples.} As elaborated in Sec.\ref{subsec:openauc_opt}, we utilize manifold mixup to generate open-set samples. Specifically, we first shuffle the received batch $B$, which produces a mini-batch $B'$. Then, mixup is conducted on the pairs in $B \times B'$, where $\times$ denotes \textit{pointwise} product of two sets. Finally, the metric is calculated on the pairs in $B \times \tilde{B}$, where $\tilde{B}$ is the batch generated by the mixup operation. Note that we expect the instances in each pair from $B \times B'$ to have different class labels, so that the mixup examples (i.e., $\tilde{B}$) can be located somewhere outside the close-set domain. Hence, we eliminate the pairs from the same classes. Note that we only mixup the pairs at the same slot of $B$ and $B'$, and the metric is evaluated on the pairs at the same slot of $B$ and $\tilde{B}$. Hence, the time complexity is $O(|B|)$, rather than $O(|B|^2)$. According to the empirical results in \cite{DBLP:conf/cvpr/ZhouYZ21}, we set $\alpha = 2$ as the default value. Meanwhile, the hyperparameter $\lambda$ is searched in $\{0.1, 0.2, 0.3, 0.4, 0.5, 0.6\}$. 

\textbf{Efficient Caculation of OpenAUC.} During the test phase, open-set samples are available. Benefiting from the pairwise formulation, we can calculate OpenAUC efficiently. Specifically, we first mask each close-set sample $\boldsymbol{x}_k$ that has been misclassified on the close-set. Specifically, we have
$$\tilde{r}(\boldsymbol{x}_k) \gets \begin{cases}
\epsilon + \max_{\boldsymbol{x}_u \in \mathcal{S}_u} r(\boldsymbol{x}_u),&  h(\boldsymbol{x}_k) \neq y_k\\
r(\boldsymbol{x}_k),& \text{otherwise}
\end{cases}$$ 
where $\mathcal{S}_u$ denotes the open-set, and $\epsilon > 0$ is a small constant. In this way, we have
$$\begin{aligned} 
    \texttt{OpenAUC}(f, r) & = \frac{1}{N_k N_u} \sum_{i=1}^{N_k}\sum_{j=1}^{N_u}\mathbb{I}[y_i = h(\boldsymbol{x}_i)] \cdot \mathbb{I}[r(\boldsymbol{x}_j) > r(\boldsymbol{x}_i)] \\
    & = \frac{1}{N_k N_u} \sum_{i=1}^{N_k}\sum_{j=1}^{N_u} \mathbb{I}[\tilde{r}(\boldsymbol{x}_j) > \tilde{r}(\boldsymbol{x}_i)] \\
    & = \texttt{AUC}(\tilde{r}).
\end{aligned}$$
In other words, OpenAUC degenerates to the traditional AUC, and common tools such as \texttt{scikit-learn} can boost the computation.

\section{More empirical results}
\label{app:empirical}
In this section, we present the empirical results on fine-grained datasets. For a comprehensive evaluation, we provide the model performances on multiple metrics such as Close-set Accuracy, AUC, OpenAUC, Error Rate@95\%TPR, and Open-set F-score. All the results are recorded in Tab.\ref{tab:cub1}-\ref{tab:cub2}, where (E/M/H) corresponds to the results on the Easy/Medium/Hard split of open-set classes. Note that we report the open-set F-score under the optimal threshold.  Besides, we did not analyze Error Rate@95\%TPR in Sec.\ref{sec:existing_metrics} since it is a metric for novelty detection, and little OSR work adopted it as a metric. From the results, we have the following observations:

\begin{itemize}
    \item The proposed method outperforms the competitors on novelty-detection metrics such as AUC and Error Rate@95\%TPR, especially on the Medium and Hard splits. Moreover: (1) The improvement on AUC comes from the AUC-based term in the proposed objective, which is consistent with our theoretical expectation. (2) The result on Error Rate validates Prop.6 that optimizing Open-AUC reduces the upper bound of FPR. Recall that $$Error Rate \downarrow = 1 - Acc \uparrow = 1 - \frac{TP + TN}{TP + TN + FP + FN},$$ $$TPR = \frac{TP}{TP + FN}, TNR \uparrow =  1 - FPR \downarrow = \frac{TN}{TN + FP}.$$
    \item Our method achieves comparable performances on the close-set accuracy and Open-set F-score. This result is reasonable since compared with CE+, no more optimization is conducted on the close-set samples in our new objective function.
    \item Benefiting from the improvement on open-set samples and the comparable performance on close-set samples, the proposed method achieves the best performance on OpenAUC.
    \item Another observation is that the Open-set F-score shares similar values for all difficulty splits. Note that the only difference among these splits comes from their open-set data. This phenomenon shows that Open-set F-score cannot differentiate the performance on the open-set. This is inevitable since this metric evaluates the open-set performance only in an implicit manner. Hence, it again validates the necessity to adopt OpenAUC as the evaluation metric.
\end{itemize}
To sum up, the empirical results on fine-grained datasets again speak to the efficacy of OpenAUC and the proposed optimization method.

\begin{table}[htbp]
    \centering
    \caption{Empirical results on CUB, where E/M/H corresponds to the results on the Easy/Medium/Hard split of open-set samples. The best and the runner-up method on each metric are marked with \Topone{red} and \Toptwo{blue}, respectively.}
    \begin{tabular}{lcccc}
        \toprule
        & \multicolumn{1}{l}{Close-set Accuracy} & AUC (E/M/H) & OpenAUC (E/M/H) &  \\
        \midrule
        Softmax & 78.1  & 79.7 / 73.8 / 66.9 & 67.2 / 63.0 / 57.8 &  \\
        GCPL \cite{DBLP:conf/cvpr/YangZYL18} & 82.5  & 85.0 / 78.7 / 73.4 & 74.7 / 70.3 / 66.7 &  \\
        RPL \cite{DBLP:conf/eccv/ChenQ0PLHP020}  & 82.6  & 85.5 / 78.1 / 69.6 & 74.5 / 69.0 / 62.4 &  \\
        ARPL \cite{9521769} & 82.1  & 85.4 / 78.0 / 70.0 & 74.4 / 68.9 / 62.7 &  \\
        CE+ \cite{DBLP:conf/iclr/Vaze0VZ22}  & \Topone{86.2}  & \Toptwo{88.3} / \Toptwo{82.3} / \Toptwo{76.3} & \Toptwo{79.8} / \Toptwo{75.4} / \Toptwo{70.8} &  \\
        ARPL+ \cite{DBLP:conf/iclr/Vaze0VZ22} & 85.9  & 83.5 / 78.9 / 72.1 & 76.0 / 72.4 / 66.8 &  \\
        \midrule
        Ours  & \Topone{86.2}  & \Topone{88.8} / \Topone{83.2} / \Topone{78.1} & \Topone{80.2} / \Topone{76.1} / \Topone{72.5} &  \\
        \bottomrule
    \end{tabular}%
    \label{tab:cub1}%
\end{table}%
  
\begin{table}[htbp]
    \centering
    \caption{Empirical results on CUB, where E/M/H corresponds to the results on the Easy/Medium/Hard split of open-set samples. The best and the runner-up method on each metric are marked with \Topone{red} and \Toptwo{blue}, respectively.}
      \begin{tabular}{lccc}
        \toprule
        & Error@95\%TPR (E/M/H) & macro F-score (E/M/H) & micro F-score (E/M/H) \\
        \midrule
        Softmax & 46.6 / 55.9 / 62.8 & 67.4 / 66.5 / 66.6 & 69.0 / 68.9 / 70.8 \\
        GCPL \cite{DBLP:conf/cvpr/YangZYL18} & 37.0 / 46.8 / \Toptwo{51.3} & 77.6 / 75.4 / 74.0 & 78.4 / 76.8 / 77.4 \\
        RPL \cite{DBLP:conf/eccv/ChenQ0PLHP020}  & 39.5 / 53.5 / 64.0 & 75.4 / 73.3 / 72.4 & 76.7 / 75.2 / 76.6 \\
        ARPL \cite{9521769} & 37.6 / 49.9 / 62.7 & 75.3 / 73.1 / 72.2 & 76.6 / 75.0 / 76.5 \\
        CE+ \cite{DBLP:conf/iclr/Vaze0VZ22} & \Toptwo{28.4} / \Toptwo{42.1} / 52.3 & \Topone{82.6} / \Topone{80.3} / \Topone{78.3} & \Topone{83.3} / \Topone{81.6} / \Topone{81.4} \\
        ARPL+ \cite{DBLP:conf/iclr/Vaze0VZ22} & 48.7 / 60.6 / 67.8 & 80.8 / 79.0 / 77.3 & 81.7 / 80.4 / 80.4 \\
        \midrule
        Ours  & \Topone{28.1} / \Topone{39.7} / \Topone{47.6} & \Toptwo{82.2} / \Toptwo{79.7} / \Toptwo{78.1} & \Toptwo{83.0} / \Toptwo{81.2} / \Toptwo{81.1} \\
        \bottomrule
      \end{tabular}%
    \label{tab:cub2}%
\end{table}%
  


\end{document}